\newtheorem{theorem}{Theorem}[section]
\newtheorem{cor}[theorem]{Corollary}
\newtheorem{proposition}[theorem]{Proposition}
\theoremstyle{definition}
\newtheorem{example}[theorem]{Example}
\newtheorem{definition}[theorem]{Definition}
\newtheorem{lemma}[theorem]{Lemma}
\newtheorem{remark}[theorem]{Remark}
\newcommand{\B}{\mathcal{B}}
\newcommand{\CC}{\mathbb{C}}
\newcommand{\RR}{\mathbb{R}}
\newcommand{\NN}{\mathbb{N}}
\newcommand{\ZZ}{\mathbb{Z}}
\renewcommand{\det}{\mathrm{det}}
\newcommand{\M}{\mathcal{M}}
\newcommand{\N}{\mathcal{N}}
\newcommand{\Y}{\mathcal{Y}}
\newcommand{\Dint}{D_{\M}}
\newcommand{\rank}{\mathrm{rank}}
\newcommand{\sort}{\boldsymbol{\mathrm{sort}}}
\newcommand{\colsort}{\boldsymbol{\mathrm{colsort}}}
\newcommand{\finv}{f^{inv}}
\newcommand{\fgeneral}{f^{general}}
\newcommand{\bigO}{\mathcal{O}}
\newcommand{\Mweight}{\M_{weighted}}
\newcommand{\Mbinary}{\M_{binary}}
\newcommand{\nweight}{n_{weighted}}
\title{Low Dimensional Invariant Embeddings for Universal Geometric Learning}
\author{Nadav Dym\thanks{Faculty of Mathematics and Faculty of Computer Science, Technion, Haifa, Israel} \and Steven J. Gortler\thanks{School of Engineering and Applied Sciences, Harvard University, Cambridge, USA}}
\date{\today}
\begin{document}
	
	\maketitle

	\begin{abstract}
		
		This paper studies separating invariants: mappings on  $D$ dimensional domains which are invariant to an appropriate group action, and which separate orbits. The motivation for this study comes from the usefulness of separating invariants in proving universality of  equivariant neural network architectures.
		
		 We observe that in several cases the cardinality of separating invariants proposed in the machine learning literature is much larger than the dimension $D$. As a result, the theoretical universal constructions based on these separating invariants is unrealistically large. Our goal in this paper is to resolve this issue. 
		
		We show that  when a continuous family of semi-algebraic separating invariants is available, separation can be obtained by randomly selecting $2D+1 $ of these  invariants. We apply this methodology to obtain an efficient scheme for computing separating invariants for several classical group actions which have been studied in the invariant learning literature. Examples include matrix multiplication  actions  on point clouds by  permutations, rotations, and various other linear groups.
		
		Often the requirement of invariant separation is relaxed and only generic separation is required. In this case, we show that only $D+1$ invariants are required. More importantly, generic invariants are often significantly easier to compute, as we illustrate by discussing generic and full separation for weighted graphs. Finally we outline an approach for proving that separating invariants can be constructed also when the random parameters have finite precision.  
		
		% Our results lead to a corresponding improvement in the complexity of theoretical universal equivariant network constructions, which now resemble the  complexity of equivariant networks used in practice.  
%		   
%		In this paper we show that for several group 
%		
%		 . , as well as the computational invariant theory literature
%		OLD
%		 applications, this paper studies invariant mappings which separate orbits- that is, invariant mappings which are injective up to equivalence. We describe a general methodology for randomly generating such separating invariants. The number of invariants needed is roughly twice the intrinsic dimension of the `data manifold' on which the invariant mappings are define.  
%	    
%	    We apply our methodology to several classical group actions of interest to invariant machine learning applications, such as the actions of permutations or orthogonal transfromations on point clouds, and obtain separating invariants which are easy to compute, and whose cardinality, which grows linearly with the dimension, is far suprerior to standard separating invariant mappings suggested in the machine learning literature whose caridnality grows polynomially or even exponentially with the dimension. These results lead to a corresponding improvement in the complexity of theoretical universal invariant network construction, which now resembles the  complexity of invariant networks used in practice. 
	\end{abstract}

    The goal of many machine learning tasks is to learn an unknown function which has some known symmetries. For example, consider the task of classifying 3D objects which are discretized as point clouds. In this scenario the unknown function maps a point cloud to a discrete set of labels. Depending on the type of 3D objects we are interested in, this function could be \emph{invariant} to translations, rotations, reflections, scaling and/or affine transformations. It is also typically invariant to permutation- reordering the points does not impact the underlying object. %\nd{visualize}
    
    Another example comes from physics simulations, where we aim to learn a function which is given a point cloud at time $t=0$, which represents the positions of a collection of physical entities such as  particles or planets, and maps them to their positions at time $t=1$. This map is \emph{equivariant} with respect to translations, orthogonal transformations (and Lorenz transformations in the relativistic setting) and permutations: applying these transformations to the input will lead to a corresponding transformation of the output.  
    
    In machine learning approaches, an approximation for the unknown function is sought for, from within a parametric hypothesis class of functions. Equivariant machine learning considers hypothesis classes which by construction have some or all of the symmetries of the unknown functions. For example, there are multiple neural network architectures for point clouds  which are equivariant (or invariant)
    to permutations \cite{wang2019dynamic,zaheer2017deep,qi2017pointnet} and/or geometric transformations such as rotations \cite{kondor2018clebsch,thomas2018tensor,deng2021vector}, orthogonal transformations \cite{batzner2021e,satorras2021n,yao2021simple}, or Lorenz transformations \cite{bogatskiy2020lorentz,fan2022nested,hao2022lorentz}. 
    
    Perhaps the most famous result in the theoretical study of general neural networks is the fact that fully connected ReLU networks can approximate any continuous function \cite{pinkus1999approximation}. Analogously, it is desirable to devise  invariant (or equivariant) neural networks which are universal in the sense that they can approximate \emph{all} continuous invariant (or equivariant) functions. This question has been studied in many works such as \cite{finkelshtein2022simple,yarotsky2022universal,puny2021frame,bokman2021zz,dym2020universality,keriven2019universal,sannai2019universal,maron2019universality,segol2019universal}.    
   
    Universality results for invariant networks typically consider functions which can be written as
        \begin{equation}\label{eq:char}
     f=\fgeneral\circ \finv, \text{ where } \finv:V\to \RR^m, \fgeneral:\RR^m\to \RR .
     \end{equation}
 where $V$ is a Euclidean space, $G$ is a group acting on it, $\finv$ is a $G$-invariant continuous function, $\fgeneral$ is continuous, and so $f$ is $G$-invariant and continuous. If $\fgeneral$ comes from a function space rich enough to approximate all continuous functions (typically a fully connected neural network), and  $\finv$ is a continuous invariant mapping which separates orbits- which means that $\finv(x)=\finv(y)$ if and only if $x=gy$ for some $g \in G$, then invariant universality is guaranteed (see e.g,, Proposition~\ref{prop:uniOrbit}). As discussed in \cite{villar2021scalars}, orbit separating invariants can also be used to construct universal equivariant networks. Thus the question of invariant and equivariant universality is to a large extent reduced to the related question of invariant separation, which is the focus of this paper.

We note that invariant mappings on $V$ induce well defined mappings from $V/G$ to some $\RR^m$, and 
separation of invariant mappings is equivalent to injectivity 
of this map. In practice, our mappings will all be continuous, and often, their inverses will also be continuous
(though we do not explore this issue in the current paper).
For this reason we will  informally refer to separating invariant mappings as invariant \emph{embeddings}. 
 
 In the equivariant learning literature, orbit separating invariant functions are typically supplied by classical invariant theory results, which in many cases can characterize all polynomial invariants of a given group action through a finite set of invariant polynomial generators. However, the number of generators is often unrealistically large.  For example, the classic point net architecture \cite{qi2017pointnet} 
 is a permutation invariant neural network which operates on point clouds in $\RR^{d\times n}$. Proofs of the universality of this network are typically based on its ability to approximate power sum polynomials, which are generators of the ring of permutation invariant polynomials (see e.g., \cite{segol2019universal,zaheer2017deep,maron2019provably}). However, 
 while in  the experimental setup reported in \cite{qi2017pointnet}, $n=1024,d=3$ and the network creates $1024$ invariant features, the number of power sum polynomials is 
 ${n+d}\choose{n} $, 
 which amounts to over $180$ million invariant features in this case.
 %when $n=1024,d=3$.
 
 There are various mathematical results from different disciplines which suggest that the number of separating invariants need not be larger than roughly twice the dimension of the domain (in particular, in the example discussed above 
 this would mean that only $2\dim(\RR^{d\times n})=2n\cdot d=2048$ permutation invariant features would be needed, rather than $180$ million). For example, in invariant theory it is known that (for groups acting on affine varieties over algebraically closed fields by automorphisms), while the number of generators of the invariant ring is difficult to control, there are separating sets whose size is $2D+1$, where $D$ 
 is the Krull dimension of the invariant ring \cite{derksen2015computational,dufresne2008separating}. Additionally, continuous orbit separating mappings on $V$ can be identified with continuous \emph{injective} mappings on the quotient space $V/G$. Thus if $V/G$ is a smooth manifold  Whitney's embedding theorem shows that it can be easily be (smoothly) embedded into $\RR^{2D+1}$, where $D$ is the dimension of $V/G$. (With more work, this can be reduced to 
 $2D$.)
 A similar theorem holds under the weaker assumption that $V/G$ is a compact metric space with topological dimension $D$ (\cite{Munkres}, page 309).  Additionally, it is a common assumption (e.g., \cite{ansuini2019intrinsic,shaham2018provable,law2006incremental} ) that the data of interest in a machine learning task typically resides in a `data manifold' $\M\subseteq V$ whose dimension $\Dint$ (the `intrinsic dimension') is significantly smaller than the dimension of $V$ (the `ambient dimension'). This assumption is at the root of many dimensionality reduction techniques such as auto-encoders, random linear projections, PCA, etc. In this scenario, we expect to achieve orbit separation with only $\approx 2\Dint$ invariants.

\begin{figure}[t]
	\includegraphics[width=\columnwidth]{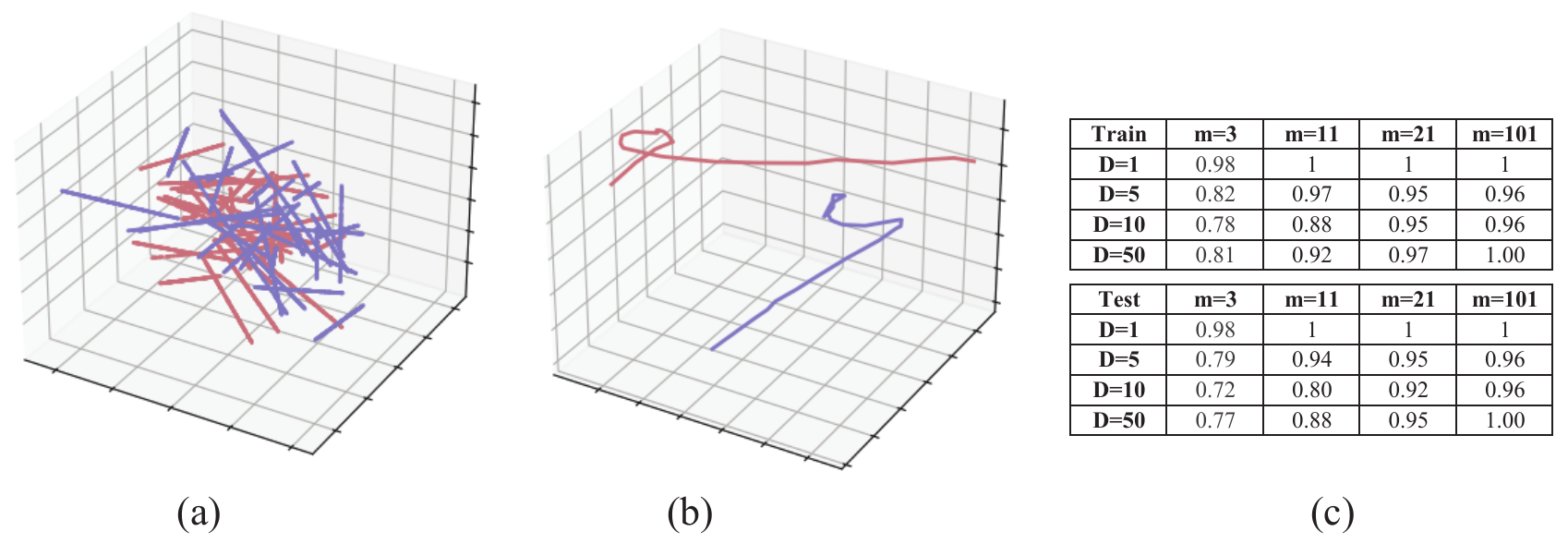}
	\caption{\small Figure~\ref{fig:sort} (a) shows  two lines in $\RR^{3\times n}$ and their image under (some) random $S_{n}$ permutations, visualized by projecting into $\RR^3$
 Subplot (b) shows the image of these lines (dimension $D=1$) under the mapping we describe in \eqref{eq:balan} with $m=2D+1=3 $. As guaranteed by Proposition~\ref{prop:Sn}, these images do not intersect. The  tables in Subplot (c) shows the training and test error when training an MLP to classify the curves based on the embedding in (b), for various values of $D$ and $m$.}
	\label{fig:sort}		
\end{figure}
 Based on the discussion above, we formulate the first goal of this paper:

 \textbf{First Goal:} Provide an algorithm for computing $\approx 2\Dint$ separating invariants for group actions on a `data manifold' of intrinsic dimension $\Dint$.

Figure \ref{fig:sort} illustrates our first goal and our discussion so far (full details of the experiment are given in Section \ref{sec:experiments}). We take two lines in high dimensional space, $\RR^{3\times n}$,
(colored orange and blue) and apply many random permutations 
from $S_n$
to the points on both lines thus obtaining many orange and blue lines. (In (a) we visualize this data projected down to $\RR^3$.) In (b) we see the image of these lines under a permutation invariant separating mapping to $\RR^3$. Due to invariance all orange (respectively blue) lines are mapped to a single orange (or blue) curve in $\RR^3$. Due to separation these curves do not intersect.  It is possible to achieve separation with only three coordinates since the intrinsic dimension of the lines is $\Dint=1$.  

The first row in the tables in Figure \ref{fig:sort}(c) shows that applying a standard neural networks architecture to the invariant curves in (b) can accurately classify points according to the line from which they originated. The other rows show that when the intrinsic dimension $\Dint$ of the data is increased, more invariant features are needed, where $2\Dint+1$ features typically give accurate results.   

\textbf{Dimensionality reduction by linear projections} In Corollary\ref{cor:linear} we provide a very general tool to achieve our first goal: with some minor assumptions, we show that if a finite set of $N$ separators is known, then the number of separators can be reduced to $2\Dint+1$ by taking $2\Dint+1$  random linear combinations of the original separators.

While we are not aware of a result as general as Corollary\ref{cor:linear} in terms of the domains and groups which it can handle, we note that the idea of dimension reduction of separators using random linear projections is not new. It is used in many of the proofs mentioned above which guarantee $\approx 2\Dint $ separators in the algebraic setting (see e.g., \cite{dufresne2008separating}), and was used for specific group actions in \cite{balan2022permutation} and \cite{cahill2020complete}. Random projections are also used
in the Whitney embedding theorem.

 A significant computational drawback of the linear projection technique is that it still necessary to compute the original large set of separators before the linear projection. Thus the total complexity of computing the separators is not improved by taking linear projections. Accordingly, we formulate our second goal, which is in fact just a refinement of the original goal

 \textbf{Second goal (refinement of first goal):}
 Provide an \emph{efficient, polynomial time} algorithm for computing $\approx 2\Dint$ separating invariants for group actions on a `data manifold' of intrinsic dimension $\Dint$.
 In our setting we do this by starting with some continuous family of maps such that this entire family is separating, and
 show that we can separate with only $\approx 2\Dint $ 
 randomly chosen elements from this family.

\subsection{Main results}\label{sub:main_results} Our main result in this paper is a methodology for addressing the second goal and \emph{efficiently} computing  $2\Dint+1$ separating invariants for several classical group actions which have been studied in the invariant machine learning community. Our methodology is inspired by the results in \cite{balan2006signal}, which uses tools from real algebraic geometry to show orbit separation in the context of the phase retrieval problem. Our results, formally stated in Theorem~\ref{thm:useful}, can be seen as a generalization of these results to general groups. To illustrate the theorem and its usefulness we will give an example which will later be discussed in more detail in Subsection \ref{sub:SL}.

 \begin{example}\label{ex:SLd}
Let $\M$ denote the collection of all $d\times n$ matrices which have rank $d$ (assume that $d\leq n$). This is a set of dimension $\Dint=n\cdot d$. Consider the action of $SL(d)=\{A\in \RR^{d\times d}| \quad \det(A)=1 \} $ on $\M$ by multiplication from the left. A natural way to construct invariants for this group action is to pick a subset $I\subseteq \{1,\ldots,n\} $ of $d$ indices and considering the function $X\mapsto \det(X_I) $, where $X_I$ is the $d\times d$ matrix obtained by choosing the $d$ columns of $X$ indexed by $I$. In fact, it is known that the functions $\det(X_I)$ are generators of the invariant ring, and as a result are separators. The trouble is that  the number of subsets of size $d$, and so the number of generators, is a prohibitive $n \choose d $. 
Using Corollary\ref{cor:linear}, we can reduce the number of separators by choosing $w^{(1)},\ldots,w^{(2nd+1)} $ random vectors in $\RR^{n\choose d} $ and considering functions of the form 
\begin{equation} \label{eq:expensive}
    X\mapsto \sum_{I\subseteq \{1,\ldots,n\}, \quad |I|=d}  w_I^{(j)} det(X_I), \quad j=1,\ldots,2nd+1.
\end{equation}
However computing each one of these invariant still has complexity of $\sim {n \choose d}$. 

Out methodology to improve upon this issue uses a  family of invariants  parameterized by the continuous `weight matrix' $W$: we note that for every $W\in \RR^{n\times d} $ the function 
$$X\mapsto \det(XW) $$
is $SL(d)$ invariant. Additionally, we note that every generator $det(X_I)$ is of the form $det(X_I)=det(XW_I) $ for an appropriate choice of a $n\times d $ matrix $W_I$. In particular this means that the value obtained by $\det(XW)$ for all possible $W$ determines $X$ uniquely, up to $SL(d)$ equivalence. Theorem~\ref{thm:useful} shows that in this scenario, if we 
simply choose random $W^{(1)},\ldots,W^{(2nd+1)} $ then the functions
$$X\mapsto \det(XW^{(j)}), \quad j=1,\ldots,2nd+1 $$
will be invariant and separating. In contrast to the separating invariants in \eqref{eq:expensive}, the complexity of computing each one of these invariants is polynomial in $n,d$.
 \end{example}

Informally, Theorem~\ref{thm:useful} can be stated as follows: 
Suppose we can find a  family of invariants $p(\cdot,w)$ parameterized by the continuous `weight vector' $w$, such that every orbit pair is
separated by some $w$.
Then  for almost every random selection of vectors $w_i$ with $i=1,\ldots,2\Dint+1$, the invariants $x\mapsto p(x,w_i)$ will separate orbits. In a sense, the usefulness of Theorem~\ref{thm:useful} is that it reduces the problem of separating \emph{all} points with a \emph{finite} number of invariants to the problem of separating \emph{pairs} of points with a \emph{continuous} family of invariants. One could draw a vague analogy here to the usefulness of the Stone-Weierstrass theorem in reducing approximation proofs to separation of pairs of points.

 Roughly speaking, the assumptions needed for this theorem are (i) that $\M$ is a semi-algebraic set- that is, a set defined by polynomial equality and inequality constraints, and (ii) that the function $p(x,w)$ is a semi-algebraic function. This is a rather large class of functions which includes polynomials, rational functions and continuous piece-wise linear functions. 

The usefulness of Theorem~\ref{thm:useful} for obtaining efficient separating invariants was illustrated in Example \ref{ex:SLd}.   In Section~\ref{sec:applications} we show similar applications for several other group actions: we  find $2\Dint+1$ orbit separating invariants for the action of permutations on point clouds which we discussed above. These invariants are continuous piece-wise linear functions obtained as a composition of the `sort' function with a linear function from the left and right.  The complexity of computing each invariant is (up to a logarithmic factor) linear in the ambient dimension. Even when we are interested in separation in all of $\M=\RR^{d\times n} $, so that $\Dint=n\cdot d$, the number of invariants we need for separation is significantly lower than the ${n+d}\choose{n} $ power sum polynomials discussed above. This advantage becomes more pronounced when the `data manifold' $\M$ is low dimensional. 

Similarly, we construct $2\Dint+1$ separating invariants for the action  orthogonal transformations $O(d)$ and special orthogonal transformations $SO(d) $, on $d$ by $n$ point clouds, which can be compared with the standard separating invariants used for these group actions which have cardinality of $n \choose 2 $ and $\sim{n \choose d} $ respectively. We also construct $2\Dint+1 $ separating invariants for Lorenz transformations (and other isometry groups),  the general linear group, and for scaling and translation. For several of the latter results we need to assume that the `data manifold' $\M$ contains only full rank point clouds. A summary of these results, and the complexity of computing each invariant, is given in Table~\ref{table}.

\textbf{Generic orbit separation} 
As suggested in several works (e.g., \cite{puny2021frame,villar2021scalars,rong2021almost}) the notion of orbit separating invariants can be replaced with a weaker notion of generically orbit separating invariants- that is, invariants defined on $\M$ which are separating outside a subset $\N$ of strictly smaller dimension. This weaker notion of separation is sufficient to prove universality only for compact sets in $\M \setminus \N$. Another disadvantage is that it is not inherited by subsets: generic orbits separation on $\M$ is not necessarily preserved on a subset $\M' \subseteq \M$, since it is even possible that $\M'$ is completely contained in $\N$. The advantage of generic separation is that it is generally easier to achieve. Indeed, in Theorem~\ref{thm:useful} we show that while we need  $2\Dint+1$ random measurements for  orbit separation, only $\Dint+1$ measurements are sufficient for generic separation. This results resembles classical results which show  that for an irreducible algebraic
variety of dimension $D$ embedded in high dimension,  almost every projective  projection down to dimension $D+1$
will be generically one-to-one. (For example see Example 
7.15 and Exercise 11.23 in~\cite{harris}.) %These arguments can
%easily be extended to cover almost every linear projection as well as the reducible case.

A more significant computational advantage in settling for generic invariants is that each invariant can usually be computed more efficiently. We exemplify this by considering graph valued permutation invariant functions: there is no known algorithm to separate graphs (up to permutation equivalence) in polynomial time \cite{grohe2020graph}. Correspondingly, while we can easily use our methodology to find a small number of separating invariants, the computational price of computing these invariant is prohibitive. On the other hand, it is known that for `generic graphs' \cite{dym2018exact,aflalo2015convex} separating graphs is not difficult, and correspondingly we are able to find a small number of generically separating invariants which can be computed efficiently. 

%\textbf{Further dimensionality reduction}
%When considering separating invariants on the full ambient space $V$, it is often possible to get slightly less than $2\Dint+1$ invariants. We discuss this briefly in Section~\ref{sec:lower}. 

\begin{table}[t]
	\begin{center}
	\begin{tabular}{c c c c c c } 
	%\begin{tabular}{ |  | |  | | | | }
		\hline
		Group   & Domain  & separators &complexity&num separators  & num generators                      \\ 
		\hline
		\hline
		$S_n$  &  $\RR^{d\times n}$   & $\langle w^{(2)},sort(X^Tw^{(1)})\rangle$ & $\bigO(nd+n log(n))$ & 2nd+1 & $\bigO\left({n+d \choose d}\right)$\\
		\hline
		$O(d)$ &  $\RR^{d\times n}$   &   $\|Xw\|^2$           & $\bigO(n\cdot d)$       & 2nd+1 & $\bigO(n^2) $ \\ 
		\hline
		$SO(d)$ &  $\RR^{d\times n}$  &   $\|Xw\|^2+\det(XW)$ & $\bigO(n\cdot d^2) $   & 2nd+1 &  $\bigO\left( {n \choose d} \right)$\\ 
		\hline
		$O(d-1,1)$ & $\RR^{d\times n}_{full}$ & $\langle Xw,QXw\rangle$ & $\bigO(n\cdot d)$ &2nd+1 & $\bigO(n^2) $  \\
		\hline
		$SL(d) $ & $\RR^{d\times n}_{full}$ & $det(XW)$ & $\bigO(nd^2) $ & 2nd+1 & $\bigO \left({n \choose d}\right)$ 
		\\
		\hline
		$GL(d)$ & $\RR^{d\times n}_{full}$ & $det^2(XW)\cdot det^{-1}(XX^T) $ & $\bigO(nd^2)$ & 2nd+1 & $1$ (only constants)
	\end{tabular}
\end{center}

	\caption{Summary of the results in Section~\ref{sec:applications}. For each group action we describe a parametric family of separators, the complexity of computing each separator, the domain on which separation is guaranteed, and the number of generators for this group action. In comparison, the number of random separators needed in all examples is $2nd+1$ (or  $2\Dint+1$ when considering separation on a semi-algebraic subset $\M $ )}
	\label{table}
\end{table}

\subsection{Related work}\label{sub:related}
\paragraph{Phase retrieval and generalizations}
As mentioned above, our results were inspired by orbit separation results in the phase retrieval literature. Phase retrieval  is the problem of reconstructing a signal $x$ in $\CC^n$ (or $\RR^n$), up to a global phase factor, from magnitudes of linear measurements without phase information $|\langle x,w_i \rangle |, i=1,\ldots,m$, where $w_i$ are complex (or real) $n$ dimensional vectors. In our terminology, the goal is to find when these measurements are separating invariants with respect to the action of the group of unimodular complex number $S^1$ (or real unimodular numbers $\{-1,1\}$).  

In \cite{balan2006signal} it is shown that $m=2n-1$ random linear measurements in the real setting, or $m=4n-2$  random linear measurements in the complex setting, are sufficient to define a unique reconstruction of the signal up to global phase. In \cite{conca2015algebraic}, the number of measurements needed for attaining separation in the complex setting is slightly reduced to  $m=4n-4$. 
%Note that $m$ does scale with $n$ as expected as $4n$ is twice the dimension of $\CC^{n}$ as a vector space over $\RR$.  

In \cite{evans2020conjugate}, conjugate phase retrieval is discussed: here the vectors $w_i$ are real measurements of complex signals in $\CC^n$. These measurements are invariant with respect to global phase multiplication and also complex conjugation, and it is shown that $4n-6$ generic measurements are separating with respect to this group action.   

The separation results obtained for \emph{real} and \emph{conjugate} phase retrieval, are equivalent to saying that on the space of $d\times n$ real matrices $X$, around $\sim 2dn$ generic measurements of the form $\|Xw_i\|$ are separating with respect to the action of $O(d)$, for the cases $d=1$ (real phase retrieval) and $d=2$ (conjugate phase retrieval).  In Subsection~\ref{sub:Od} we use our methodology to show that for $d\geq 2$, separation with respect to the action of $O(d)$ can be obtained by choosing $2nd+1 $ random measurements of the form $\|Xw_i\|^2 \quad i=1,\ldots,2nd+1$. We note that this result is in essence not new, as it can be derived easily from results on the recovery of rank one matrices from rank one linear measurements (see \cite{rong2021almost}, Theorem 4.9 )

There are two natural ways to generalize the separation results of \emph{complex} phase retrieval: the first is to consider  measurements $\|Xw_i\|$ with $w_i\in \CC^d$  and $X\in \CC^{d\times n}$. These measurements are invariant with respect to the action of $d\times d$ unitary matrices on $\CC^{d\times n}$ by multiplication from the left. In \cite{kech2017constrained} it is shown that $4d(n-d)$ such generic measurements are separating with respect to the unitary action. For $d=1$ this coincides with the $4n-4$ invariants needed for complex phase retrieval.   

An alternative generalization of complex phase retieval is searching for separating invariants for the action of $SO(d)$ on $\RR^{d\times n} $. Complex phase retrieval gives us separating invariants for the special case $d=2$, using the identifications $S^1 \cong SO(2) $ and $\CC \cong \RR^2 $.  In  Subsection~\ref{sub:SOd} we note that when $d>2$ separation is not possible with measurements which involve only norms and linear operations, but can be achieved by adding an additional `determinant term' to the norm based measurements.

While our results on invariant separation for  $SO(d)$ are new (to the best of our knowledge), we feel that the main novelty in this work is in the generalization of the basic real algebraic geometry arguments used in the phase retrieval proofs to a very general class of real group actions (Theorem~\ref{thm:useful}) and invariant functions (semi-algebraic mappings), and in the ability to apply this same methodology to multiple group actions (see Table~\ref{table}).

Finally, we note that the  results quoted above show that  it is often possible to get generic separation even when the number of invariants is slightly smaller than the $2\Dint+1$ promised by our Theorem~\ref{thm:useful}. We do not pursue the optimal cardinality for separation in this paper for two reason. First, this typically requires a case-by-case analysis and in this paper we try to focus on highlighting a general principle that can be applied to a wide number of settings. Secondly, ultimately the number of separating invariants is not likely to be smaller than the dimension of $\M/G$, and in most applications we are interested in $\dim(\M/G)\approx \dim(\M) $, so that ultimately we do not expect an improvement in separation cardinality by more than a factor of $2$.

%\paragraph{Invariant Machine Learning} As mentioned above, separating invariants and universality have been discussed in many invariant machine learning papers. A small fraction of these papers discuss the cardinality of the separating invariants. In \cite{villar2021scalars} the $\bigO(n^2)$ inner product polynomials $\langle x_i,x_j \rangle$ are suggested as a separating set with respect to the action of $O(d)$ on $\RR^{d\times n}$, and it is noted that a subset of approximately $n(d+1)$ inner products  is generically separating. The problem of achieving low dimensional generic separation for the joint action of orthogonal and permutation groups is left to future work. In this work we achieve low dimensional generic separation for the joint group action, and low dimensional full separation for the action of the orthogonal group alone. \nd{rephrase, we don't discuss joint group}

\paragraph{Permutation Invariant Machine Learning}
Obtaining $\bigO(n\cdot d)$ separating invariants for the action of the permutation group $S_n$  on $\RR^{d\times n}$ is trivial when $d=1$, as discussed in e.g., \cite{wagstaff2019limitations}. When $d>1 $,   a separating set of invariants is suggested in \cite{balan2022permutation} which is a composition of row-wise sorting with linear transformations from the left and the right. They show that this construction is separating when the intermediate dimension is very high (larger than $n!$), and that whenever separation is achieved, this map is also Bi-Lipschitz. We will show that in fact the intermediate dimension need only be $2nd+1$ (or lower when the `data manifold' has lower dimension), and that the second, larger matrix in their construction can have a certain sparse structure which was previously used in \cite{zhang2019fspool} (see Remark~\ref{remark:balan}). We note that when $d=2$ the sufficiency of $\sim n$ and even $\sim \log(n)$ measurements was shown in \cite{matouvsek2008many,renyi1952projections}

\paragraph{Recent results} Shortly after the first preprint of this manuscript appeared, the authors of \cite{cahill2022group,mixon2022injectivity,mixon2022max} proposed invariants called max filters which are defined for  all Hilbert spaces with isometric  group actions. For finite dimensional Hilbert spaces these invariants were shown to be separating using our Theorem~\ref{thm:useful}. Max filters can thus be suggested as alternative separating invariant to those we suggest in Section~\ref{sec:applications} for the actions of $S_n$, $O(d) $ and $SO(d)$, which are all groups of linear isometries. An attractive attribute of this approach is that the same invariant fits all cases, while an advantage of the invariants we present here is that they are more efficient to compute (computing a max filter in these examples is done by solving an optimization problem with relatively high complexity). 

Another recent application of our work here, was a derivation of (relatively) efficient separating invariants for the joint action of $SO(d)\times S_n $ or $O(d)\times S_n $ on $\RR^{d\times n}$. This is described in \cite{hordan2023complete}.

	\textbf{Paper organization} 
	 The structure of the paper is as follows: 
	In Section~\ref{sec:main} we provide some mathematical background, and use it to state and prove our main theorem. We then discuss in general terms the ways in which this theorem can be used to devise separating invariants of low complexity for various group actions, and relationships to concepts from invariant theory such as generating invariants and polarization..
	
	In Section~\ref{sec:applications} we describe several applications of the theorem, showing in several examples of interest how a low-dimensional set of separating invariants, which can be computed efficiently, can be obtained using our methodology. These examples include point clouds with multiplication by permutation matrices from the right, or multiplication by orthogonal transformations, rotations, volume preserving linear transformations,  general linear transformations, from the left. We also discuss the more trivial scaling and translation actions.

	In Section~\ref{sec:generic} we discuss generic separation. We show that generic separation can be obtained using only $\Dint+1$ invariants, and show that generic separation can be computed efficiently for weighted graphs while full separation is unlikely due to the fact that the graph isomorphism problem has no known polynomial time algorithm. 
	
	In Section~\ref{sec:computer_numbers} we give an outline of an argument that shows that separation can be obtained also if the parameters of the separating invariants we consider have finite precision. This argument is applicable only for some of the polynomial invariants we consider here. Finally  in Section~\ref{sec:experiments} we provide some initial experiments, showing a simple permutation invariant classification problem on point clouds in $\RR^{d\times n}$ with high ambient dimension $n\times d$, and low intrinsic dimension $\Dint$,  can be efficiently solved using  $2\Dint+1 $ of our separating invariants, as our theory predicts.

	\section{Definitions and main theorem}\label{sec:main}
	\subsection{Notation}
	We denote matrices  $X$ in $\RR^{d\times n}$ by capital letters and refer to them as point clouds. The  columns of a point cloud are denoted by little letters $X=(x_1,x_2,\ldots,x_n)$. We use $1_n$ to denote the constant vector $(1,1,\ldots,1)\in \RR^n$, and $e_i\in \RR^n $ to denote the $n$-dimensional vector with $1$ in the $i$-th coordinated and zero in the remaining coordinates. 
	\subsection{Mathematical background}
	We begin this section by explaining how continuous separating invariant functions are used to characterize all continuous invariant functions.  We then lay out some definitions we need for our discussion and prove our main theorem regarding the construction of low-dimensional continuous separating invariant functions (Theorem~\ref{thm:useful}). 
	\paragraph{Universality and orbit separation}
	We begin with defining invariant functions and orbit separation
\begin{definition}
	Let $G$ be a group acting on a set $\M$, let $\Y$ be a set, and let $f:\M\to \Y$. We say that $f$ is \emph{invariant} if $f(x)=f(gx)$ for all $g\in G, x\in \M$. We say that a subset $\M'$ of $\M$ is stable under the action of $G$  if  $gm\in \M'$ for all $g\in G$ and $m\in \M'$.
\end{definition}
\begin{definition}
	Let $G$ be a group acting on a set $\M$, let $\Y$ be a set, and let $f:\M \to \Y$ be an invariant function.We say that $f$ \emph{separates orbits} if $f(x)=f(y)$ implies that $x=gy$ for some $g\in G$. We say that a finite collection of invariant functions $f_i:\M\to \Y, i=1,\ldots,N$ separates orbits, if the concatenation $(f_1(x),\ldots,f_N(x))$ separates orbits.  
\end{definition}
The following proposition, proved in the appendix, shows that every continuous invariant function can be written as a composition of a orbit separating, continuous invariant functions and a continuous (non-invariant) function.
	\begin{proposition}\label{prop:uniOrbit}
	Let $\M$ be a topological space, and $G$ a group which acts on $\M$. Let $K\subset \M$ be a compact set, and let $\finv:\M\to \RR^N$ be a continuous $G$-invariant map that separates orbits. Then for every continuous invariant function  $f:\M \to \RR$ there exists some continuous $\fgeneral:\RR^N \to \RR$ such that 
	$$f(x)=\fgeneral(\finv(x)), \quad \forall x\in K $$
	\end{proposition}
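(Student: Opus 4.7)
The plan is to construct $\fgeneral$ first on the image $\finv(K) \subset \RR^N$, check it is continuous there, and then use a standard extension theorem to extend it continuously to all of $\RR^N$.

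First I would define $\fgeneral$ pointwise on $\finv(K)$ as follows: for $y \in \finv(K)$, pick any $x \in K$ with $\finv(x) = y$ and set $\fgeneral(y) := f(x)$. Well-definedness is immediate from the hypotheses: if $x_1, x_2 \in K$ satisfy $\finv(x_1) = \finv(x_2)$, then since $\finv$ separates orbits we have $x_1 = g x_2$ for some $g \in G$, and since $f$ is $G$-invariant $f(x_1) = f(x_2)$. By construction $f(x) = \fgeneral(\finv(x))$ holds on $K$.

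The main obstacle is continuity of $\fgeneral$ on $\finv(K)$. To handle this I would exploit compactness. Introduce the equivalence relation $\sim$ on $K$ defined by $x_1 \sim x_2 \iff \finv(x_1) = \finv(x_2)$, and let $q: K \to K/{\sim}$ denote the quotient map with the quotient topology. Both $\finv|_K$ and $f|_K$ are continuous and constant on $\sim$-equivalence classes (the latter by the well-definedness argument above), so they descend to continuous maps $\widetilde{\finv}: K/{\sim} \to \finv(K)$ and $\widetilde{f}: K/{\sim} \to \RR$. By construction $\widetilde{\finv}$ is a continuous bijection. Since $K$ is compact, so is $K/{\sim}$, and $\finv(K) \subset \RR^N$ is Hausdorff. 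Hence $\widetilde{\finv}$ is a homeomorphism, so its inverse is continuous and $\fgeneral|_{\finv(K)} = \widetilde{f} \circ \widetilde{\finv}^{-1}$ is continuous.

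Finally, since $\finv(K)$ is compact and therefore closed in $\RR^N$, and $\RR^N$ is a normal topological space, the Tietze extension theorem produces a continuous extension of $\fgeneral|_{\finv(K)}$ to all of $\RR^N$; call this extension $\fgeneral$. Then $f(x) = \fgeneral(\finv(x))$ for all $x \in K$, as required. The key steps where care is needed are the well-definedness (which is where orbit separation enters) and the compact-to-Hausdorff argument upgrading $\widetilde{\finv}$ to a homeomorphism; everything else is standard.
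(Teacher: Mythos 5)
Your proof is correct and takes essentially the same route as the paper's: define the map on $\finv(K)$ via a quotient, use that a continuous bijection from a compact space onto a Hausdorff space is a homeomorphism, and extend to all of $\RR^N$ by Tietze. The only cosmetic difference is that you quotient $K$ by the fibers of $\finv$ while the paper quotients $\M$ by the $G$-orbits; both versions rest on the same separation/invariance argument and the same topological facts.
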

Somewhat more complicated characterizations of \emph{equivariant} functions via separating invariants are described in \cite{villar2021scalars}.

Now that we have established the importance of continuous separating invariant for approximation of continuous invariant (or equivariant) functions, we will exclusively focus on the topic of finding a small collection of such continuous separating invariant functions. Our technique for doing so relies on several concepts from real algebraic geometry which we will now introduce.   
\paragraph{Real algebraic geometry}
Unless stated otherwise, the background on real algebraic geometry presented here is from \cite{Basu}. 
\begin{definition}[Semi-algebraic sets]
	A real semi-algebraic set in $\RR^k$ is a finite union of sets of the form
	$$\{x\in \RR^k| p_i(x)=0 \text{ and } q_j(x)>0 \text{ for } i=1,\ldots,N \text{ and } j=1,\ldots,m\} $$
where $p_i$ and $q_j$ are multivariate polynomials with real coefficients.
\end{definition}
Semi-algebraic sets are closed under finite unions, finite intersections, and complements. We next define semi-algebraic functions
\begin{definition}[Semi-algebraic functions]
	Let $S\subseteq \RR^\ell$ and $T\subseteq \RR^k$ be semi-algebraic sets. A function $f:S\to T$ is semi-algebraic if 
	$$Graph(f)=\{(s,t)\in S \times T| \quad  t=f(s) \} $$ 	
	is a semi-algebraic set in $\RR^{\ell+k}$.
\end{definition}
Polynomials $f:\RR^\ell \to \RR^k $ are obviously semi-algebraic functions. Similarly given two polynomials $p_1,p_2:\RR^{\ell} \to \RR$ the rational function $q(x)=\frac{p_1(x)}{p_2(x)}$ is well-defined and semi-algebraic on the semi-algebraic set $\{x\in \RR^\ell| \, p_2(x)\neq 0\}$.  

In addition, assume we are given a collection of semi-algebraic sets $S_1,\ldots,S_n\subseteq \RR^\ell $ whose union is all of $\RR^\ell$, and a function $f$ whose restriction to each $S_i$ is a polynomial $f_i$. We call such functions piece-wise polynomial functions. Piece-wise polynomials are semi-algebraic functions since 
$$Graph(f)=\cup_{i=1}^n \{(s,t)| \, s\in S_i \text{ and } t=f_i(s)\}$$
In particular, this class of functions include ReLU neural networks and the sorting function we will use in Subsection~\ref{sub:Sn}, which are continuous piece-wise linear functions. Piece-wise linear functions are  a special case of piece-wise polynomial functions where each semi-algebraic set $S_i$ is a closed convex polyhedron, and each $f_i$ is an affine functions.

\begin{wrapfigure}{R}{0.5\textwidth}
	\includegraphics[width=0.45\columnwidth]{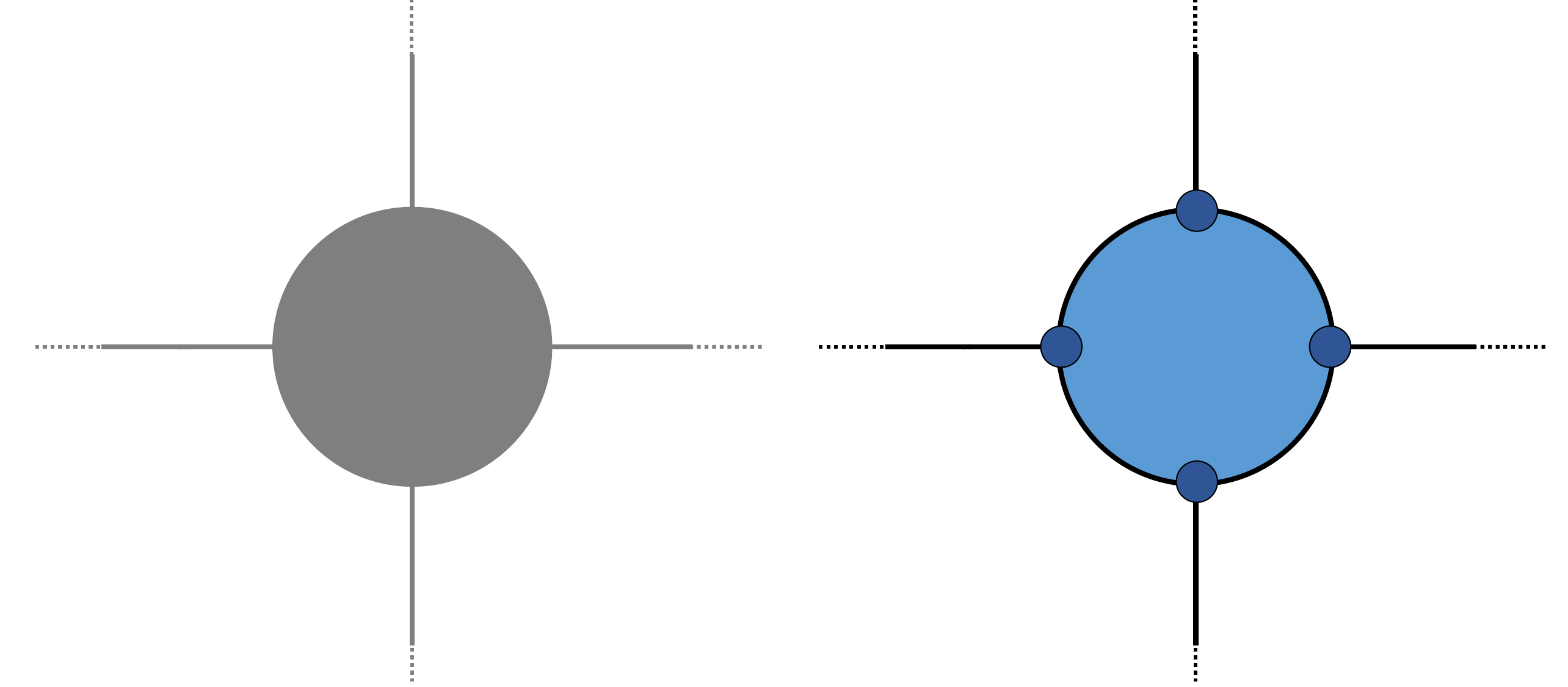}
	\caption{\small Stratification of a semi-algebraic set. See explanation in main text.}
	\label{fig:burner}		
\end{wrapfigure}
\paragraph{Stratification and dimension} A semi-algebraic set $S$ can be written as a finite  union of pairwise disjoint sets $S_1,\ldots,S_n$ such that each $S_i$ is a $C^{\infty}$ manifold of dimension $r_i$, and the closure of each $S_i$ in $S$ contains only $S_i$ itself, and sets $S_j$ with $r_j<r_i $. This decomposition is called a \emph{stratification}  (see  \cite{Basu}, page 177). The dimension of $S$ is the maximal dimension $\max_{1\leq i \leq n}r_i $ of all the manifolds in the decomposition (this definition of dimension can be shown to be independent of the stratification chosen). 

Figure~\ref{fig:burner} shows a stratification of the semi-algebraic set 
	$$S=\{(x,y)| \quad 1-x^2-y^2>0\}\cup \{(x,y)| \quad 1-x^2-y^2=0\} \cup \{(x,y)| \quad xy=0\} .$$
	The set is shown in the left of the figure, and the stratification is visualized on the right. It includes a single two-dimensional open disc, eight open curves (dimension 1), and four points (dimension 0). Hence the dimension of $S$ is two.

\paragraph{Families of invariant separators}
We now introduce some definitions needed for discussion of group actions and separation using a real algebraic geometry framework.
% \begin{definition}\label{def:group_alg}
% We say that $G$ is a semi-algebraic group if $G$ is 
%  a semi-algebraic subset of  a real finite dimensional vector space $V_G$ with an additional group structure. 

% We say that $\M$ is a \emph{semi-algebraic} $G$-set if $\M$ is a semi-algebraic subset of a  real finite dimensional vector space $V$, the semi-algebraic group $G$  
% acts on $\M$, and the group action
% $$G\times \M \ni (g,x)\mapsto gx\in \M $$
% is a semi-algebraic map. We call the pair $(G,\M) $ a \emph{semi-algebraic module}.
% \end{definition}

% For example, the group of orthogonal matrices $\O(d)$ is a semi-algebraic subset of $V_G=\RR^{d\times d}$ defined by the polynomial equations $RR^T=I_d $. Its action on $\M=\RR^{d\times n}=V$ is the polynomial $(R,X)\mapsto RX$. Thus $(\O(d),\RR^{d\times n}) $ is a semi algebraic group module. Additionally, if   $(G,\M) $ is a \emph{semi-algebraic module}, and  $\M'\subseteq \M$ is a semi-algebraic subset which is stable under the action of $G$, then $(G,\M') $ is a semi-algebraic module as well. Thus in our example when $G=\O(d) $ and $\M=\RR^{d\times n}$, we can get another semi-algebraic module by replacing $\M$ with the $O(d)$-stable semi-algebraic subset $\M'=\{X\in \RR^{d\times n}| \, \|X\|_F=1\} $.  

% We note that our definition of semi-algebraic groups and semi-algebraic $G$-sets are  similar to the standard definition in \cite{park2002linear}, the only difference being that we do not require the group multiplication and inversion operations to be semi-algebraic. 

Assume that $G$ is a group acting on a set $\M$. The \emph{orbit} of $x\in \M$ under the action of a group $G$ is the set 
$$[x]=\{y\in \M| \, \exists g\in G \text{ such that } y=gx \} $$  
When $y$ is in the orbit of $x$ we use the notation $x \sim_Gy$, and when it is not in the orbit of $x$ we use the notation $x \not \sim_Gy$.

	\begin{definition}
		Let $G$ be a group acting on a semi-algebraic set $\M$, and $D_w$ be an integer
		greater than or equal to one.
		We say that a semi-algebraic function
		$$p:\M \times \RR^{D_w}\to \RR $$
		is \emph{a family of $G$-invariant semi-algebraic functions}, if for every $w\in \RR^{D_w} $ the function $p(\cdot,w)$ is $G$ invariant.
		
		We say a family of $G$-invariant semi-algebraic functions \emph{separates} orbits in $\M$, if for all $x,y\in \M$ such that  $x \not \sim_G y$ there exists $w\in \RR^{D_w} $ such that  $p(x,w)\neq p(y,w) $.
		
		We say a family of $G$-invariant semi-algebraic functions \emph{strongly separates} orbits in $\M$, if for all $x,y \in \M$ with  $x \not \sim_G y$, the set
		$$  \{w\in \RR^{D_w}| \, p(x,w)= p(y,w)\} $$ 
		has dimension $\leq D_w-1$. 	 
	\end{definition}
We note that if  $p(x,w)$ is polynomial in $w$ for every fixed $x$, then separation implies strong separation, since the set of zeros of a polynomial which is not identically zero is always dimensionally deficient.

\subsection{Main Theorem}
We now have all we need to state our main theorem:
	\begin{theorem}\label{thm:useful}
		Let $G$ be a group acting on a  semi-algebraic set $\M$ of dimension $dim(\M)=\Dint$. Let $p:\M\times \RR^{D_w} \to \RR$ be a  family of $G$-invariant semi-algebraic functions. 
		 If $p$ strongly separates orbits in $\M$, then for Lebesgue almost-every $w_1,\ldots,w_{2\Dint+1}\in \RR^{D_w} $ the $2\Dint+1$ $G$-invariant semi-algebraic functions
			$$p(\cdot,w_i), i=1,\ldots,2\Dint+1 $$
			separate orbits in $\M$.
		
	\end{theorem}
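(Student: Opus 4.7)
The strategy is to identify the set of ``bad'' parameter tuples $(w_1,\ldots,w_{2\Dint+1}) \in (\RR^{D_w})^{2\Dint+1}$ for which separation fails with a projection of a semi-algebraic set of codimension at least one, and then conclude by the standard fact that a semi-algebraic set of dimension strictly less than the ambient dimension has Lebesgue measure zero. The whole proof hinges on a single arithmetic identity: $2\Dint + (2\Dint+1)(D_w - 1) = (2\Dint+1)D_w - 1$.

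Explicitly, I would consider
\[ Z = \bigl\{(x,y,w_1,\ldots,w_{2\Dint+1}) \in \M \times \M \times (\RR^{D_w})^{2\Dint+1} : x \not\sim_G y,\ p(x,w_i)=p(y,w_i) \text{ for all } i \bigr\}. \]
The first step is to show $Z$ is semi-algebraic. The conditions $p(x,w_i) = p(y,w_i)$ are semi-algebraic because $p$ is. The subtler condition ``$x \not\sim_G y$'' a priori involves the group $G$, but by $G$-invariance of each $p(\cdot,w)$ combined with the (ordinary) separation implied by strong separation, it is equivalent to
\[ \exists w \in \RR^{D_w}: \ p(x,w) \neq p(y,w), \]
which is the projection of a semi-algebraic subset of $\M \times \M \times \RR^{D_w}$, hence semi-algebraic by Tarski--Seidenberg. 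Thus $Z$ is semi-algebraic.

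The second step is a fiber-dimension count. Fix $(x,y) \in \M \times \M$ with $x \not\sim_G y$. Strong separation says $\{w : p(x,w) = p(y,w)\}$ is a semi-algebraic subset of $\RR^{D_w}$ of dimension at most $D_w-1$, so the fiber of $Z$ above $(x,y)$ is a $(2\Dint+1)$-fold product of such sets and has dimension at most $(2\Dint+1)(D_w-1)$. Since $\dim(\M \times \M) \leq 2\Dint$, the semi-algebraic fiber dimension theorem (a consequence of the stratification results quoted from Basu--Pollack--Roy) yields
\[ \dim(Z) \leq 2\Dint + (2\Dint+1)(D_w-1) = (2\Dint+1)D_w - 1. \]
Projection to the $w$-coordinates cannot increase dimension, so the image of $Z$ in $(\RR^{D_w})^{2\Dint+1}$ is a semi-algebraic set of dimension strictly less than $(2\Dint+1)D_w$, and hence of Lebesgue measure zero. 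Every $w$-tuple outside this image yields invariants $p(\cdot,w_1),\ldots,p(\cdot,w_{2\Dint+1})$ that separate orbits in $\M$.

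The main obstacle I anticipate is the first step: recasting the group-theoretic relation $x \not\sim_G y$ as a semi-algebraic condition on $\M \times \M$, without any a priori assumption that $G$ or its action is itself semi-algebraic. This is precisely where separation enters the argument beyond the fiber-dimension bound supplied by strong separation: $G$-invariance turns ``same orbit'' into the implication $p(x,\cdot) = p(y,\cdot)$, separation provides the converse, and only then can Tarski--Seidenberg be applied to eliminate the existential quantifier over $w$.
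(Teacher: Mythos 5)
Your proposal is correct and follows essentially the same route as the paper's proof: the same ``bad set'' $\B_m$, the same Tarski--Seidenberg projection argument to make $x\not\sim_G y$ semi-algebraic, the same fiber-dimension count using strong separation, and the same conclusion that a semi-algebraic set of deficient dimension has Lebesgue measure zero. The only cosmetic difference is that you invoke a standard semi-algebraic fiber-dimension theorem where the paper proves this step itself (Lemma~\ref{lem:dim}, via the rank theorem for the projection map).
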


%This result is different from a Whitney type embedding 
%theorem, or the small separating results in in~\cite{derksen2015computational,dufresne2008separating} 
%in that we do
%not start with a finite set of separating functions,
%but rather with a continuum of functions. This distinction
%can lead to large computational advantages compared to methods that
%first use huge initial set of separating functions followed
%by linear projection down to low dimension.

The remainder of this subsection is devoted to proving this theorem. At a first reading we recommend skipping to Subsection~\ref{sub:method} at this point. 

We begin by recalling some additional real algebraic geometry facts we will need for the proof, also taken from \cite{Basu}. We first recall some  basic properties of real algebraic dimension:
\begin{enumerate}
	\item If $A\subseteq B \subseteq \RR^\ell$ are semi-algebraic sets then 
	$$\dim(A)\leq \dim(B) $$
	\item If $A \subseteq \RR^\ell$ and $B \subseteq \RR^m$ are semi-algebraic sets then 
	$$\dim(A\times B)=\dim(A)+\dim(B) $$
	\item If $S\subseteq \RR^k$ is a semi-algebraic set and  $f:S\to \RR^\ell$ is a semi-algebraic function then  $f(S)$ is a semi-algebraic set and
	$$\dim(f(S))\leq \dim(S) $$
	If $f$ is a diffeomorphism then we have equality $\dim(f(S))=\dim(S)$.
	\item If $A \subseteq \RR^\ell$ is a semi-algebraic set of dimension strictly smaller than $\ell$ then it has Lebesgue measure zero.
\end{enumerate}
Another useful fact we will use is that the projection of a semi-algebraic set is also a semi-algebraic set.

We next state and prove the following lemma
\begin{lemma}\label{lem:dim}
	Let $S\subseteq\RR^{D_1}$ be a semi-algebraic set and $f:\RR^{D_1}\to \RR^{D_2}$ a polynomial. Assume that for all $t\in f(S)$ we have that $\dim(f^{-1}(t))\leq \Delta$, then 
	$$\dim(S)\leq \dim(f(S))+\Delta $$ 	
\end{lemma}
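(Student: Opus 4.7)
The plan is to reduce the statement to a trivial bundle situation by invoking the semi-algebraic triviality theorem (Hardt's theorem), and then read off the dimension from the listed properties of semi-algebraic dimension. Throughout I interpret the hypothesis as saying that each fiber $f^{-1}(t) \cap S$ of the restricted map $f|_S$ has dimension at most $\Delta$, which is what we actually need; the two readings of ``$f^{-1}(t)$'' coincide for our purposes since $f^{-1}(t)\cap S \subseteq f^{-1}(t)$ and so property~(1) on monotonicity of dimension transfers a bound on full preimages to a bound on restricted fibers.

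First, I would apply Hardt's semi-algebraic triviality theorem to the semi-algebraic map $f|_S \colon S \to f(S)$. This yields a finite semi-algebraic partition $f(S) = T_1 \sqcup T_2 \sqcup \cdots \sqcup T_k$, chosen base points $t_j \in T_j$, and semi-algebraic homeomorphisms
$$\varphi_j \colon T_j \times F_j \;\longrightarrow\; f^{-1}(T_j)\cap S,$$
where $F_j = f^{-1}(t_j)\cap S$, which commute with the projection to $T_j$ on one side and $f|_S$ on the other. In particular, the sets $f^{-1}(T_j)\cap S$ are semi-algebraic and together partition $S$.

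Next, I would compute the dimension of each piece. Since $\varphi_j$ is a semi-algebraic bijection (and in fact a homeomorphism), property~(3) applied to $\varphi_j$ and $\varphi_j^{-1}$ gives $\dim(f^{-1}(T_j)\cap S) = \dim(T_j \times F_j)$. By the product formula (property~(2)) this equals $\dim(T_j) + \dim(F_j)$. The assumption on fiber dimensions says $\dim(F_j) \leq \Delta$, and monotonicity of dimension (property~(1)) applied to $T_j \subseteq f(S)$ says $\dim(T_j) \leq \dim(f(S))$. Combining, each stratum satisfies $\dim(f^{-1}(T_j)\cap S) \leq \dim(f(S)) + \Delta$.

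Finally, since $S$ is the finite union of the pieces $f^{-1}(T_j) \cap S$, its dimension is the maximum of their dimensions (a standard consequence of properties~(1) and the stratification definition), and taking the maximum over $j$ yields $\dim(S) \leq \dim(f(S)) + \Delta$. I expect the only real subtlety to be the invocation of Hardt's theorem itself: if one wished to avoid citing it, an alternative would be to stratify $S$ into $C^\infty$ semi-algebraic manifolds $S_i$ on which $f$ has constant rank $r_i$, and then apply the constant-rank theorem on each stratum to obtain $\dim(S_i) = r_i + \dim(\text{generic fiber of } f|_{S_i}) \leq \dim(f(S))+\Delta$; taking the maximum over $i$ gives the same bound. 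Either route makes the proof essentially bookkeeping, with the triviality-theorem version being considerably cleaner.
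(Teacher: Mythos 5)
Your proof is correct, but it takes a genuinely different route from the paper's. You invoke Hardt's semi-algebraic triviality theorem to partition $f(S)$ into finitely many semi-algebraic pieces $T_j$ over which $f|_S$ is a trivial bundle, and then read off $\dim(f^{-1}(T_j)\cap S)=\dim(T_j)+\dim(F_j)\leq \dim(f(S))+\Delta$ from the product formula and monotonicity, finishing with the finite-union property; your handling of the two readings of ``$f^{-1}(t)$'' (ambient preimage versus fiber of $f|_S$) is the right observation and is disposed of correctly by monotonicity, and using property (3) in both directions to see that a semi-algebraic homeomorphism preserves dimension is also fine. The paper instead argues more locally and with lighter machinery: it takes a stratification of $S$, restricts attention to a top-dimensional stratum $S_1$, picks a point where the differential of $f|_{S_1}$ has maximal rank $r$, and applies the classical constant-rank theorem to write $f$ locally as a coordinate projection, so that $\dim(S)=r+(\dim(S)-r)\leq \dim(f(S))+\Delta$ follows from a single local chart. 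The trade-off is exactly what you anticipate in your closing remark: Hardt's theorem makes the argument almost pure bookkeeping (indeed the lemma is a standard corollary of semi-algebraic triviality), but it cites a substantially deeper result, whereas the paper's route is self-contained given the stratification facts already quoted and the rank theorem; your sketched alternative via constant rank on each stratum is essentially the paper's argument, except that the paper only needs the top stratum and a single point of maximal rank rather than a constant-rank decomposition of all of $S$.
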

\begin{proof}
	Denote $\Delta_S=\dim(S)$. Let $S_i, i=1,\ldots,N$ be a stratification of $S$. Without loss of 
	generality, let us have $\dim(S_1)=\Delta_S$. Because of this 
	equality, we will be able to argue about $\Delta_S$ by arguing
	only about $\dim(S_1)$.
 	
	Fix some $s_0\in S_1$ so that the differential of $f_{|S_1}$ at $s_0$ has maximal rank $r$. The set of $s\in S_1$ whose differential has rank $r$ is open, and so there is a neighborhood of $s_0$ on which $f$ has constant rank.  By the  rank theorem \cite{lee2013smooth}, $f$ is locally a projection: this means that  there exists a diffeomorphism $\psi$ which maps an open set $U$ with $s_0\in U \in S_1 $ to $(0,1)^{\Delta_S}$ and a diffeomorphism $\phi:V \to \RR^{D_2} $ where $V$ in open in $\RR^{D_2}$ and contains $f(U)$, such that the function $\tilde f=\phi \circ f \circ \psi^{-1}$ is a projection:
	$$\tilde f (s_1,s_2,\ldots,s_r,\ldots,s_{\Delta_S})=(s_1,s_2,\ldots,s_r,0,0,\ldots,0), \quad \forall (s_1,\ldots,s_{\Delta_S}) \in (0,1)^{\Delta_s} .$$
	For the projection $\tilde f$. we have for every $t$ in the image of $\tilde f$ the equality
	\begin{equation}\label{eq:Delta}
	\Delta_S=r+(\Delta_S-r)=\dim \tilde{f}((0,1)^{\Lambda_s})+\dim \tilde{f}^{-1}(t).
	\end{equation}
	We can now get our result by exploiting the relationship between $f$ and $\tilde f$. Since $\tilde f$ and the restriction of $f$ to $U$ have the same image, we have 
	$$\dim \tilde{f}((0,1)^{\Delta^s})=\dim f(U)\leq \dim(f(S)) $$
	and 
	$$\dim \tilde{f}^{-1}(t)=\dim(\psi({f}^{-1}(\phi^{-1}(t)))\cap U)=\dim(f^{-1}(\phi^{-1}(t))\cap U)\leq \dim f^{-1}(\phi^{-1}(t))\leq \Delta  $$
	Plugging the last two inequalities into the left hand side of \eqref{eq:Delta} concludes the proof.
\end{proof}
We can now prove Theorem~\ref{thm:useful}. Our bundle-based
proof presented below was inspired by ideas in~\cite{balan2006signal}.
	\begin{proof}[Proof of Theorem~\ref{thm:useful}]
		The set
		$$\{(x,y)\in \M \times \M| \, x\not \sim_G y \} $$
  is semi-algebraic as it is the projection onto the $(x,y)$ coordinates of the semi-algebraic set 
  $$\{(x,y,w)| \, p(x,w)\neq p(y,w) \} $$
  It follows that the for every $m\in \NN$ the set 
		\begin{equation}\label{eq:Bm}
		\B_m=\{(x,y,w_1,\ldots,w_m)\in \M\times \M \times \RR^{D_w\times m}| \quad x \not \sim_G y \text{ but } p(x,w_i)=p(y,w_i), i=1,\ldots,m   \} 
		\end{equation}
		is semi-algebraic. We will sometimes refer to the set $\B_m$ as the `bad set'.
		
		Let $\pi$ and $\pi_W$ denote the projections 
		$$\pi(x,y,w_1,\ldots,w_m)=(x,y), \quad 
		\pi_W(x,y,w_1,\ldots,w_m)=(w_1,\ldots,w_m) .$$
		The set $\pi_W(\B_m)$ is precisely the set of $w_1,\ldots,w_m$ which are not separating. Our goal is to show that, when $m$ is big enough, then the dimension of $\pi_W(\B_m)$ is less than $mD_w$ and so is has Lebesgue measure zero.  
		
		Let us start by bounding the dimension of $\B_m$.
		For every $(x,y) \in \pi \B_m$ we have 
		$$\pi^{-1}(x,y)=\{(x,y)\}\times \underbrace{W_{(x,y)}\times W_{(x,y)}\times \ldots \times W_{(x,y)} }_{m \text{ times }}, \text{ where } W_{(x,y)}=\{w\in \RR^{D_w}| \, p(x,w)=p(y,w)\} .$$
		By  assumption $p$ is strongly separating and thus
		$\dim(W_{(x,y)})\leq D_w-1$. Therefore  
		$$\dim \pi^{-1}(x,y)\leq mD_w-m .$$
		It follows from Lemma~\ref{lem:dim}  that 	when $m\geq 2\Dint+1$ 
		\begin{equation}\label{eq:reuse}
		\dim(\B_m)\leq \dim(\pi(\B_m))+mD_w-m\leq 2\Dint-m+mD_w \leq mD_w-1,
		\end{equation}
		and since applying a $\pi_W$ to $\B_m$ can only decrease its dimension we obtain that 
		$\dim(\pi_W(\B_m))\leq  mD_w-1 $
		as required.
	\end{proof}

\subsection{Using the main theorem }\label{sub:method}
The goal of this subsection is to describe in general terms how Theorem~\ref{thm:useful} can be used to achieve low-dimensional orbit-separating invariants (which can be computed efficiently). In the next section we will apply Theorem~\ref{thm:useful} to find a small, efficiently computed collection of separating invariants for several classical group actions, many of which have been studied in the context of invariant machine learning. These results will be presented as a case-by-case elementary analysis, which requires only a combination of Theorem~\ref{thm:useful} with elementary linear algebra arguments. The purpose of this subsection is to provide a general explanation to the results in the next section, based on known results from invariant theory.   

A methodological application of Theorem~\ref{thm:useful} can be achieved by searching for polynomial invariants, and using known results from classical invariant theory which studies these invariants. In particular, for the classical group actions we discuss in the next section, we typically have an available First Fundamental Theorem (FFT) for this group action: that is, a  finite set of invariant polynomials $f_1,\ldots,f_N$ which are called \emph{generators},  such that for every invariant polynomial $p$ 
	there exists a polynomial $q:\RR^N\to \RR$ such that 
	$$p(x)=q(f_1(x),\ldots,f_N(x)) .$$
	The generators of the invariant polynomial ring are \emph{algebraic separators}\footnotemark- that is, any two distinct orbits which can be separated by \emph{any} invariant polynomial will be separated by one of the generators. Let us for now assume that on our 'data manifold' $\M$ the generators do indeed separate orbits (this is often, but not always, the case. We will return to this issue in a few paragraphs). Typically, the
	 cardinality $N$ of the generators is much larger than what we would like. The easiest (but not recommended) method for achieving a smaller collection of polynomials is by starting with a generating set (or some other possibly large known set of semi-algebraic separating invariants ) and
	 applying linear projection.
	 \begin{cor}\label{cor:linear}
	 Let $G$ be a group acting on a semi-algebraic set 
  $\M$ of dimension $\Dint$. Assume that $f_i:\M\to \RR, i=1,\ldots,N$ are semi-algebraic mappings which separate orbits.	Then for almost all $w^{(1)},\ldots,w^{(2\Dint+1)}\in \RR^N$, the functions
	  $$p(x,w^{(j)})=\sum_{i=1}^N w^{(j)}_if_i(x), j=1,\ldots,2\Dint+1$$
	  separate orbits.
	 \end{cor}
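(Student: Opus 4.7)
The plan is to apply Theorem~\ref{thm:useful} directly to the parametric family
$$p:\M\times\RR^N\to\RR,\qquad p(x,w)=\sum_{i=1}^N w_i f_i(x),$$
with weight-space dimension $D_w=N$. To do so I need to verify three things: that $p$ is a semi-algebraic function, that $p(\cdot,w)$ is $G$-invariant for every fixed $w$, and that $p$ strongly separates orbits on $\M$.

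The first two points are routine. Each $f_i$ is semi-algebraic by hypothesis, and the map $(x,w)\mapsto \sum_i w_i f_i(x)$ is a polynomial in the variables $w$ with semi-algebraic coefficients in $x$; from the graph description and closure of semi-algebraic sets under finite unions, intersections, and projections, $p$ is semi-algebraic. Invariance of $p(\cdot,w)$ for each fixed $w$ follows immediately from the invariance of each $f_i$, since $\sum_i w_i f_i(gx)=\sum_i w_i f_i(x)$ for all $g\in G$.

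The key step is strong separation. Fix $x,y\in\M$ with $x\not\sim_G y$ and consider the set
$$W_{(x,y)}=\{w\in\RR^N\mid p(x,w)=p(y,w)\}=\Big\{w\in\RR^N\,\Big|\,\sum_{i=1}^N w_i\bigl(f_i(x)-f_i(y)\bigr)=0\Big\}.$$
This is the zero locus of a single linear functional in $w$. Because the $f_i$ collectively separate orbits on $\M$, there exists at least one index $i_0$ with $f_{i_0}(x)\neq f_{i_0}(y)$, so that linear functional is not identically zero. Hence $W_{(x,y)}$ is a proper linear hyperplane in $\RR^N$, which has dimension exactly $N-1=D_w-1$. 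Thus $p$ strongly separates orbits in $\M$.

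With all hypotheses of Theorem~\ref{thm:useful} verified, the theorem yields that for Lebesgue almost every $(w^{(1)},\ldots,w^{(2\Dint+1)})\in(\RR^N)^{2\Dint+1}$, the $2\Dint+1$ invariant functions $p(\cdot,w^{(j)})$ separate orbits on $\M$, which is precisely the statement of the corollary. There is no substantive obstacle here; the only slightly delicate point is confirming that linearity of $p$ in $w$, together with the separating property of the original family, automatically upgrades separation to strong separation.
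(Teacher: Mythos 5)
Your proof is correct and follows essentially the same route as the paper: both reduce to Theorem~\ref{thm:useful} by exploiting linearity in $w$, the only cosmetic difference being that you verify strong separation directly (the bad set $W_{(x,y)}$ is a hyperplane), whereas the paper invokes its general remark that polynomiality in $w$ upgrades separation to strong separation and then exhibits separation via $w=e_i$.
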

 \begin{proof}
 Since $p(x,w)=\sum_{i=1}^N w_if_i(x)$ is polynomial in $w$ for fixed $x$, if we can show that $p$ is a family which separates orbits then it also strongly separates orbits and so Theorem~\ref{thm:useful} gives us separation. Given $x$ and $y$ in $\M$ whose orbits do not intersect, we know that there is some $i$ such that $f_i(x)\neq f_i(y)$ and so $p(x,w=e_i)\neq p(y,w=e_i) $. 
 \end{proof}
	  The dimensionality reduction technique described in Corollary~\ref{cor:linear} is essentially a random linear projection from $\RR^N$ to $\RR^{2\Dint+1}$. This  method was used for generating a small number of separating invariants in \cite{balan2022permutation} and \cite{cahill2020complete}, and is at the heart of the proofs mentioned earlier for the existence of a small set of separating invariants (see \cite{kamke2012algorithmic,dufresne2008separating}). From a computational perspective this approach is sub-optimal as it requires a full computation of all $N$ separating invariants as an intermediate step. 
	
	In many of the examples we discuss in the next section, a significantly more efficient approach is provided by the fact that the generating invariants $f_1,\ldots,f_N$ are obtained from a single invariant via \emph{polarization}. In our context polarization can be described as follows: Assume that $G$ is a subgroup of $GL(\RR^d)$ acting on $\RR^{d\times n}$ and $\RR^{d\times n'}$  by multiplication from the left. If $f:\RR^{d\times n'}\to \RR$ is $G$-invariant, then we can combine $f$, and any linear $W\in \RR^{n \times n'}$,  to create invariants on $\RR^{d\times n}$ of the form
	$$p(X,W)=f(XW), W\in \RR^{n\times n'} $$
	If our original generating invariants $f_1,\ldots,f_N $ were all of the form $f(XW_i), i=1,\ldots,N$, then $p$ is a separating family of semi-algebraic mappings, and so we obtain $2\Dint+1$ separating invariants $p(X,W_i), i=1\ldots,2\Dint+1 $ without needing to compute all of the original generators. For more on the relationship between polarization and separation see \cite{draisma2008polarization}.

	\paragraph{Algebraic separation vs. orbit separation} 
	We now return to discuss a question we touched upon previously: when do invariant polynomials, and thus the generators, separate orbits? In general, a group action can have two distinct orbits which cannot be separated by any invariant polynomial. The main obstruction is that a continuous function which is constant on $G$ orbits is also constant on the orbits' closure. Thus two orbits which do not intersect cannot be separated if their closures do intersect. The classical example for this is the action of $G=\{x>0\}$ on $\RR$ via multiplication. This action has three orbits: positive numbers, negative numbers, and zero. The closures of these orbits all intersect zero and hence the only invariant functions which are continuous on all of $\RR$ are the constant functions. We will find similar issues occurring in the next section for the action on $\RR^{d\times n}$ by scaling or multiplication from the left by $GL(\RR^d)$: in both cases there are no non-constant invariant continuous functions, and  thus we will rely on separating rational invariants for these examples (which are not continuous on all $\RR^{d\times n}$ since they have singularities).  
	
	\footnotetext{in the invariant theory literature the term separation usually refers to what we call here algebraic separation. See \cite{derksen2015computational}}
	
	The scaling group and $GL(\RR^d) $ are open subsets of Euclidean spaces.  For compact groups, the orbits of the group action will be compact and thus equal to their closures, and so the closures of disjoint orbits will remain disjoint. For closed (non-compact) groups acting on $\RR^{d\times n}, d\leq n$,the orbit of every full rank matrix $X$  under the group action is homeomorphic to $G$ and thus closed. Thus for such closed non-compact groups, orbit separation and algebraic separation are identical on the space of $d$ by $n$ full rank matrices which we denote by $\RR^{d\times n}_{full}$. As we will see, when $X$ is not full rank its orbit's closure will often intersect other orbits. 
	
	In the example in the next Section, we will achieve separation of orbits on all of $\RR^{d\times n}$ for actions of compact groups, and separation of orbits on $\RR^{d\times n}_{full}$ for actions of closed non-compact groups. That is, when we can guarantee orbits closures do not intersect, we are able to achieve orbit separation by polynomials. Indeed, for \emph{complex linear reductive groups}, orbits whose closures do not intersect can always be separated by polynomials \cite{derksen2015computational}. These results can be adapted to achieve the separation results we show here: the real groups we discuss are subgroups of complex linear reductive groups, and they share the same set of generators. As such, the separation of orbits for the complex groups implies separation for the real sub-groups. 
	
	We stress again that in practice, the proofs that we use for separation of our continuous family of functions rely only on elementary linear algebra and not on the first fundamental theorem and other invariant theory results noted above. We discuss these results in the next section.

	\section{Separating invariants for point clouds}\label{sec:applications}
	In this section we will use Theorem~\ref{thm:useful} to obtain a collection of $2\Dint+1$ separating invariants (or $\Dint+1$ generically separating invariants) on the data manifold $\M\subseteq \RR^{d\times n}$, for several classical group actions which are of interest in the context of invariant machine learning. For non-compact group actions we will need to assume that $\M$ contains only full rank matrices.  The group actions we consider are multiplications by permutation matrices from the right, or multiplication by the left by: orthogonal transformations, generalized orthogonal transformations, special orthogonal transformations, volume preserving transformations, or general linear transformations. We will also show that these group actions can be combined with translation and scaling with no additional costs. The complexity of computing the invariants is rather moderate, as can be seen in Table~\ref{table} which summarizes the results of this section.
	
	%In constrast, separating invaraiant for the combined action% of multiplication from the left by e.g., rotations and from the right by permutations is more difficult to deal with, as will be discussed in detail in the next section. 

	\subsection{Permutation invariance}\label{sub:Sn} 
We begin by considering the action of the group or permutations  on $n$ points, denoted by $S_n$, on $\RR^{d\times n}$ by swapping the  order of the points. This group action has been studied extensively in the recent invariant learning literature (e.g., \cite{qi2017pointnet,segol2019universal,wagstaff2019limitations,zaheer2017deep}). In particular, the approach we suggest here is strongly related to recent results obtained in \cite{balan2022permutation}. This relationship will be discussed in Remark~\ref{remark:balan}. 

%  Algebraically we think of $S_n$ as the group of permutation matrices which is a subgroup of $GL(\RR^n)$, which acts on $\RR^{d\times n}$ via multiplication (and transposition) from the right:
% $$P_*(X)=XP^T, X\in \RR^{d\times n}, P\in S_n .$$
% We see that $(S_n,\RR^{d\times n})$ is a semi-algebraic module since permutation matrices can be defined by the polynomial equations $P1_n=1_n, P^T1_n=1_n $ and $P_{ij}^2=P_{ij} $ and the group action is polynomial.

Let us first discuss the simple case when $d=1$. Interestingly, in this case the  
ring of polynomials invariants on $\RR^{1 \times n}$ is generated by only $n$ invariants, known as the \emph{elementary symmetric polynomials} . An alternative choice of generators (\cite{kraftinvariant}, exercise 8) which can be computed more efficiently are the power sum polynomials.  
$$\phi_k(x)=\sum_{j=1}^n x_j^k, k=1,\ldots,n .$$
 Let $\Phi:\RR^n\to \RR^n$ denote the map whose coordinates are the power sum polynomials, that is $\Phi(x)=(\phi_1(x),\ldots,\phi_n(x))$. It is known that the power sum polynomials separate orbits (for an elementary proof of this see \cite{zaheer2017deep}).
 
 An alternative way of achieving $n$-dimensional separation is by sorting: let $\sort:\RR^n\to \RR^n $ be the map which sorts a vector in ascending order. This map is invariant to permutations and separates orbits. It is a continuous piece-wise linear map (and so a semi-algebraic map), but is not a polynomial. Note that $\sort(x)$ can be computed in $O(n \log(n))$ operations while computing $\Phi(x)$ requires $O(n^2)$ operations. Additionally, sorting has been successfully used for permutation-invariant machine learning \cite{blondel2020fast,zhang2019deep} while power sum polynomials are discussed as a theoretical tool \cite{zaheer2017deep,segol2019universal} but are not used in practice. Finally, in \cite{balan2022permutation} it is shown that $\sort$ is an isometry (with respect to the Euclidean metric on the output space and a natural metric on the input quotient space $\RR^n/S_n $) while $\Phi$ is not even Bi-Lipschitz .

For $d>1$, separation by polynomials is achievable by   the multi-dimensional power sum polynomials,  defined as 
\begin{equation}\label{eq:powsum}
\phi_\alpha(X)=\sum_{j=1}^n x_j^\alpha, \quad  \alpha\in \ZZ_{\geq 0}^d, |\alpha|\leq n.
\end{equation}
The multi-dimensional power sum polynomials are also generators of the invariant ring. They are used in many papers which prove universality for permutation-invariant constructions \cite{dym2020universality,segol2019universal,zaheer2017deep}. However, the number of power sum polynomials is $n+d\choose n$: when $d>1$ and $n\gg d $ this is significantly larger than the dimension of $\RR^{n\times d}$.

Generalizing the successfulness of the function $\sort$ in separating orbits to the case $d>1$ is less straightforward: it is possible to consider lexicographical sorting: this mapping separates orbits but is not continuous. An alternative generalization could be to sort each row independently- this gives a continuous mapping but it does not separate orbits.
 We now use Theorem~\ref{thm:useful} to propose a low-dimensional set of invariants for the case $d>1$
by polarizing a  $d=1$ separating invariant mapping $\Psi$ (which could for example be  $\sort$ or the 1-dimensional power sums $\Phi $): 

\begin{proposition}\label{prop:Sn}
	Let $\M$ be semi-algebraic subset of $\RR^{d\times n}$ of dimension $\Dint$, which is stable under the action of $S_n$ by multiplication from the right. Let $\Psi:\RR^n \to \RR^n$ be a permutation invariant semi-algebraic mapping which separates orbits, and denote 
	\begin{equation}\label{eq:cont_perm_inv}
	f(X,w^{(1)},w^{(2)})=\langle w^{(2)},\Psi(X^Tw^{(1)})\rangle , X\in \RR^{d \times n},w^{(1)} \in \RR^d, w^{(2)}\in \RR^n 
	\end{equation}
	If $m\geq 2\Dint+1$  then 
	For Lebesgue almost every $(w_1^{(1)},w_1^{(2)}),\ldots,(w_m^{(1)},w_m^{(2)}) $ in $\RR^d\times \RR^{n}$, the invariant functions
	$$f(\cdot,w_i^{(1)},w_i^{(2)}), \quad i=1,\ldots,m$$
	are  separating with respect to the action of $S_n$.
\end{proposition}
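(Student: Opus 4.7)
The plan is to apply Theorem~\ref{thm:useful} with $D_w = d + n$ (treating the combined parameter $w = (w^{(1)}, w^{(2)})$ as living in $\RR^{d+n}$). The two things to verify are that $f$ is a family of $S_n$-invariant semi-algebraic functions, and that this family \emph{strongly} separates orbits in $\M$.

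Invariance and the semi-algebraic property are essentially bookkeeping. If $P \in \RR^{n \times n}$ is a permutation matrix, then $(XP)^T w^{(1)} = P^T X^T w^{(1)}$ is a permutation of the entries of $X^T w^{(1)}$, so by $S_n$-invariance of $\Psi$ we get $f(XP, w^{(1)}, w^{(2)}) = f(X, w^{(1)}, w^{(2)})$. The semi-algebraic property of $f$ follows from composing the polynomial maps $X \mapsto X^T w^{(1)}$ and $(z, w^{(2)}) \mapsto \langle w^{(2)}, z \rangle$ with the semi-algebraic map $\Psi$.

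The heart of the matter is strong separation. Fix $X \not\sim_{S_n} Y$ in $\M$, set $v(w^{(1)}) := \Psi(X^T w^{(1)}) - \Psi(Y^T w^{(1)})$, and let
\[
A = \{(w^{(1)}, w^{(2)}) \in \RR^d \times \RR^n : \langle w^{(2)}, v(w^{(1)}) \rangle = 0\}, \qquad B = \{w^{(1)} : v(w^{(1)}) = 0\}.
\]
I would split $A$ into the part lying over $B$, which is $B \times \RR^n$, and the part lying over $\RR^d \setminus B$, whose fibers over $w^{(1)}$ are $(n-1)$-dimensional hyperplanes in $\RR^n$. Using Lemma~\ref{lem:dim} (or a direct argument), the second part has dimension at most $d + (n-1)$, while the first has dimension $\dim(B) + n$. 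Therefore it suffices to show that $\dim(B) \leq d - 1$.

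This is the step that requires the main idea. Because $\Psi$ separates $S_n$-orbits in $\RR^n$, any $w^{(1)} \in B$ must satisfy $Y^T w^{(1)} = P_\sigma X^T w^{(1)}$ for some permutation matrix $P_\sigma$. Hence
\[
B \subseteq \bigcup_{\sigma \in S_n} \ker(Y^T - P_\sigma X^T),
\]
a finite union of linear subspaces of $\RR^d$. The hypothesis $X \not\sim_{S_n} Y$ says $Y \neq X P_\sigma^T$ for every $\sigma$, so each of the kernels is a \emph{proper} linear subspace and has dimension at most $d-1$; their union does as well. Combining, $\dim(A) \leq d + n - 1 = D_w - 1$, which is exactly strong separation. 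The conclusion then follows from Theorem~\ref{thm:useful} applied with $\Dint = \dim(\M)$. I expect the reduction of $B$ to a finite union of linear subspaces via the pigeonhole on permutations $\sigma$ to be the one non-routine step; everything else is linear algebra and invocation of the main theorem.
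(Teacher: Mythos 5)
Your proposal is correct and follows essentially the same route as the paper: fix a non-equivalent pair $X,Y$, bound the bad parameter set by splitting it over the locus $B$ where $\Psi(X^Tw^{(1)})=\Psi(Y^Tw^{(1)})$ and its complement, control the fibers off $B$ by the hyperplane condition on $w^{(2)}$ together with Lemma~\ref{lem:dim}, and conclude via Theorem~\ref{thm:useful}. The only difference is cosmetic: you spell out the bound $\dim(B)\leq d-1$ via the finite union of proper kernels $\ker(Y^T-P_\sigma X^T)$, a step the paper asserts without writing out the union over permutations.
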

\begin{proof}
	The permutation invariance of $f$ for every fixed choice of parameters follows from the invariance of $\Psi$.  By Theorem~\ref{thm:useful} it is sufficient to show that the family of semi-algebraic invariant mappings $f$ strongly separates orbits. Fix some $X,Y\in \RR^{d\times n}$ with disjoint $S_n$ orbits. We need to show that the dimension of the semi-algebraic set 
	$$B=\{(w^{(1)},w^{(2)})\in \RR^d \times \RR^n| f(X,w^{(1)},w^{(2)})=f(Y,w^{(1)},w^{(2)}) \} $$
	is strictly smaller than $n+d$. Since $X$ can not be re-ordered to be equal to $Y$, it follows that the set 
	$$B_1=\{w^{(1)}\in \RR^d | X^Tw^{(1)} \text{ is equal to } Y^Tw^{(1)}\text{ up to reordering} \} $$
	has dimension $d-1$. Thus it is sufficient to show that the set 
	$$\tilde{B}=\{(w^{(1)},w^{(2)})\in \RR^d \times \RR^n| f(X,w^{(1)},w^{(2)})=f(Y,w^{(1)},w^{(2)}) \text{ and } w^{(1)}\not \in B_1  \} $$
	has dimension $\leq n+d-1$.  For fixed $w^{(1)}\not \in B_1$, the orbit separation of $\Psi$ implies that $\Psi(X^Tw^{(1)})\neq \Psi(Y^Tw^{(1)}) $ and so  the set of $w^{(2)}$ for which $\langle w^{(2)}, \Psi(X^Tw^{(1)}) \rangle=\langle w^{(2)}, \Psi(Y^Tw^{(1)}) \rangle $ has dimension $n-1$.  Denoting by $\pi$ the projection of $\tilde{B}$ onto the first coordinate, this means that for every $w^{(1)}\in \pi(\tilde B)$  we have that $\dim(\pi^{-1}w^{(1)})=n-1$ and from Lemma~\ref{lem:dim} this implies 
	$$\dim(\tilde{B})\leq \dim(\pi(\tilde{B}))+n-1\leq n+d-1 $$
	Thus $f$ is strongly separating which concludes the proof.
\end{proof}
%\begin{Remark}
%A more compact notation for the result in Proposition~\ref{prop:Sn} in the case where $\Psi=\sort$ is as follows: let $m=2\Dint+1$ (or $\Dint+1$) and let $\csort:\RR^{n\times m}\to \RR^{n\times m}$ be the function which sorts each column independently. Then for almost every $W^{(1)}\in \RR^{d\times m} W^{(2)}\in \RR^{n} $
%\end{Remark}

We conclude this subsection with some remarks on the significance of this result in the context of the existing literature. Firstly, we note that characterizations of permutation-invariant mappings on $\RR^{d\times n}$ which use separating mappings of the form
$$(x_1,\ldots,x_n)\in \RR^{d\times n}\mapsto \sum_{j=1}^nF(x_j) $$
are common in the literature investigating the expressive power of neural networks for sets and graphs (see for example Lemma 5 in  \cite{morris2021weisfeiler}). However, these are typically based on the \emph{multivariate} power sum polynomials, so that the output dimension of $F$ is the unrealistically high ${n+d \choose n} $ as discussed above. In contrast we can obtain separation on all of $\RR^{d\times n}$ with $2n\cdot d+1 $ invariants, or an even smaller number of invariants when restricting to a lower dimensional $S_n$ stable set $\M $, by choosing $\Psi$ to be the \emph{univariate} power sum polynomial mapping $\Phi$ defined in \eqref{eq:powsum}:
\begin{cor}\label{cor:sum}
	Let $\M$ be semi-algebraic subset of $\RR^{d\times n}$ of dimension $\Dint$, which is stable under the action of $S_n$ by multiplication from the right. Then there exists a polynomial mapping $F:\M \to \RR^{2\Dint+1} $ such that the function 
	\begin{equation}\label{eq:Fseparating}
	\M \ni X=(x_1,\ldots,x_n) \mapsto \sum_{j=1}^n F(x_j)
	\end{equation}
	is invariant and separating.
\end{cor}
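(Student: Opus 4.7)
The plan is to specialize Proposition~\ref{prop:Sn} to the case where $\Psi = \Phi$ is the univariate power sum mapping, and then observe that the resulting family of separating invariants is already in the ``sum over points'' form described in \eqref{eq:Fseparating}. So the corollary is really just a matter of unpacking the definition.

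First I would recall that $\Phi : \RR^n \to \RR^n$ is given by $\Phi(v)_k = \sum_{j=1}^n v_j^k$ for $k=1,\ldots,n$, and that $\Phi$ is a permutation-invariant semi-algebraic (in fact polynomial) mapping which separates orbits. Thus Proposition~\ref{prop:Sn} applies: for Lebesgue almost every choice of parameters $(w_i^{(1)}, w_i^{(2)}) \in \RR^d \times \RR^n$, $i=1,\ldots,2\Dint+1$, the invariants $f(\cdot, w_i^{(1)}, w_i^{(2)})$ separate orbits of $S_n$ on $\M$. Fix such a choice.

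Next I would rewrite each $f(X, w^{(1)}, w^{(2)})$ explicitly. Using the definition of $\Phi$, we have
\[
f(X, w^{(1)}, w^{(2)}) \;=\; \langle w^{(2)}, \Phi(X^T w^{(1)}) \rangle \;=\; \sum_{k=1}^n w^{(2)}_k \sum_{j=1}^n (x_j^T w^{(1)})^k \;=\; \sum_{j=1}^n G_{w^{(1)}, w^{(2)}}(x_j),
\]
where $G_{w^{(1)}, w^{(2)}}(x) := \sum_{k=1}^n w^{(2)}_k (x^T w^{(1)})^k$ is a polynomial in $x \in \RR^d$. So each separating invariant is already of the desired ``sum over columns of a polynomial evaluated on each column'' form.

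Finally I would define the polynomial map $F : \RR^d \to \RR^{2\Dint+1}$ whose $i$-th coordinate is $G_{w_i^{(1)}, w_i^{(2)}}$, and restrict it to $\M$. Then $X \mapsto \sum_{j=1}^n F(x_j)$ is precisely the concatenation $(f(X, w_1^{(1)}, w_1^{(2)}), \ldots, f(X, w_{2\Dint+1}^{(1)}, w_{2\Dint+1}^{(2)}))$, which is invariant and separating by Proposition~\ref{prop:Sn}. There is no real obstacle here: the only thing to notice is that the ``polarized power sum'' family in Proposition~\ref{prop:Sn} has the structural property that each invariant factors through a sum over points, so the randomized selection of parameters directly gives a single polynomial $F : \M \to \RR^{2\Dint+1}$ producing the required separating sum decomposition.
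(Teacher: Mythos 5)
Your proposal is correct and follows essentially the same route as the paper: specialize Proposition~\ref{prop:Sn} to $\Psi=\Phi$, expand $\langle w^{(2)},\Phi(X^Tw^{(1)})\rangle$ as a sum over columns of the polynomial $x\mapsto\sum_{k=1}^n w^{(2)}_k (x^Tw^{(1)})^k$, and take $F$ to be the concatenation of these coordinate polynomials. The only cosmetic difference is that the paper introduces the auxiliary map $\hat\Phi(t)=(t,t^2,\ldots,t^n)$ to express the same computation.
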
  
\begin{proof}[proof of corollary]
Denote 
$$\hat \Phi(t)=(t,t^2,\ldots,t^n) $$
so that we have 
$$\Phi(t_1,\ldots,t_n)=\sum_{i=1}^n \hat \Phi(t_i) .$$
Taking $\Psi=\Phi$ in Proposition~\ref{prop:Sn}, we obtain that for $m=2\Dint+1$, and   Lebesgue almost every choice of parameters, the mapping $f(X,w_i^{(1)},w_i^{(2)}), i=1,\ldots,m $ is invariant and separating. Note that the $i$-th coordinate of this map is given by 
\begin{align*}
f(X,w_i^{(1)},w_i^{(2)})&=\langle w_i^{(2)},\Phi(X^Tw_i^{(1)})\rangle=\langle w_i^{(2)},\sum_{j=1}^n\hat \Phi(x_j^Tw_i^{(1)})\rangle\\
&=\sum_{j=1}^n \langle w_i^{(2)},\hat \Phi(x_j^Tw_i^{(1)})\rangle=\sum_{j=1}^n F_i(x_j)
\end{align*}
where we define $F_i:\RR^d \to \RR $ by
$$F_i(x)=\langle w^{(2)},\hat \Phi(x^Tw^{(1)})\rangle.$$
Thus the mapping as in \eqref{eq:Fseparating} with $F=(F_i)_{i=1}^m $ is invariant and separating.
\end{proof}

\begin{remark}\label{remark:balan}
When choosing $\Psi=\sort$ in the formulation of
	 Proposition~\ref{prop:Sn}, we obtain invariants which are closely related to those discussed in \cite{balan2022permutation}. To describe the results in this paper and the relationship to our results here let us first rewrite our results with $\Psi=\sort$ in matrix notations:  Denote by $\colsort:\RR^{n\times m} \to \RR^{n\times m}$ the continuous piece-wise linear function which independently sorts each of the $m$ columns of an $n\times m$ matrix in ascending order. Let $A\in \RR^{d\times m} $ be a matrix whose $m$ columns correspond to $w^{(1)}_1,\ldots, w^{(1)}_m$, and let $B\in \RR^{n \times m} $ be matrix whose $m$ columns correspond to $w^{(2)}_1,\ldots, w^{(2)}_m$. Proposition~\ref{prop:Sn} can be restated in matrix form as saying that  on $\M=\RR^{d\times n}$, for $m=2nd+1$, and  Lebesgue almost every $A,B$, the mapping
	\begin{equation}\label{eq:balan}
		\RR^{d\times n} \ni X\mapsto L_B\circ \beta_A(X)
	\end{equation}
is invariant and separating, where $L_B:\RR^{n\times m} \to \RR^m $ and $\beta_A:\RR^{d\times n} \to \RR^{n \times m} $ are defined by 
$$[L_B(Y)]_j=\sum_{i=1}^n B_{ij}Y_{ij} \text{ and } \beta_A(X)=\colsort (X^TA)  .$$
Note that it follows that $\beta_A$ is invariant and separating as well. 

In \cite{balan2022permutation} Balan et al. consider invariant maps which are compositions of $\beta_A$ as defined above with general linear maps $L:\RR^{n \times m} \to \RR^{2nd} $. Under the assumption that $m>(d-1)n!$, and that the parameters defining $A$ and $L$ are generic, they show that these maps are  separating, and moreover, Bi-Lipschitz (with respect to the Euclidean metric on the output space and a natural metric on the input quotient space $\RR^{d\times n}/S_n $). Thus the main differences between Balan's results and the results here are 
\begin{enumerate}
	\item Balan's proof requires $>n! $ measurements to guarantee separation of $\beta_A$, while we only require $2nd+1$ measurements. 
	\item We consider compositions of $\beta_A$ with sparse linear mappings $L_B$ (these same mappings are  suggested in \cite{zhang2019fspool}). In contrast, Balan considers general linear mappings $L$ which are defined by  $n$ times more parameters than $L_B$.  
	\item Balan's results show that $\beta_A $ and $L\circ \beta_A$ are Bi-Lipschitz. We do not consider this important aspect in this paper. We note that Balan shows that $\beta_A$ is Bi-Lipschitz whenever $\beta_A$ is separating. Thus their results coupled with out own show that $\beta_A$ is Bi-Lipschitz even when $m=2nd+1 $. The Bi-Lipschitzness of our sparse $L_B$ was not directly addressed in \cite{balan2022permutation}, and we leave this to future work.     
\end{enumerate}

%
%	where $W^{(1)}$ is a $d\times m$ matrix with random entries corresponding to the vectors $w_i^{(1)}, i=1,\ldots,m $ and $W^{(2)}$ is a sparse linear mapping from $\RR^{n\times m}$ to $\RR^m$ with $n\cdot m $ random non-zero entries corresponding to the vectors $w_i^{(2)}, i=1,\ldots,m $. The separating invariants proposed in \cite{balan2022permutation} were also of the form \eqref{eq:balan}, but the matrix $W^{(2)}$ was not sparse, and more importantly they proved separation of such maps only when $m$ grew factorially with $n$. 
%	
%	An important aspect which was discussed in \cite{balan2022permutation} and we did not discuss here is the stability of the proposed separating mapping. They show that whenever the map $\colsort (X^TW^{(1)}) $ is separating it is Bi-Lipschitz (with respect to the Euclidean metric on the output space and a natural metric on the input quotient space $\RR^{d\times n}/S_n $).  In this case, for generic full matrices $W^{(2)}$ the map in \eqref{eq:balan} is Bi-Lipschitz as well. Thus combining our results with the result of \cite{balan2022permutation} gives that for almost all $W^{(1)}\in \RR^{d\times m}$ and  $W^{(2)}\in \RR^{n\times m}$, where $m \geq 2n\cdot d+1$, the map in \eqref{eq:balan} is not only separating but also Bi-Lipschitz, on all of $\RR^{d\times n} $. The sparse construction of  $W^{(2)}$ we suggest is guaranteed to be separating, but Bi-Lipschitz-ness is not directly guaranteed by the results in \cite{balan2022permutation}. \nd{fix this}
\end{remark}
		\subsection{Orthogonal invariance}\label{sub:Od}
		We now consider the action of the group or orthogonal matrices $O(d) $  on $\RR^{d\times n}$ via multiplication from the left.
	%We see that $(O(d),\RR^{d\times n})$ is a semi-algebraic module since orthogonal matrices are defined by the algebraic equation $RR^T=I_d$ and the multiplication action is polynomial. 
 
 We consider a polynomial family of invariants of the form	\begin{equation}\label{eq:pr}
		p(X,w)=\|Xw\|^2, X\in \RR^{d \times n}, w\in \RR^n .
	\end{equation}
For fixed $X,w$ the cost of computing this invariant is $\bigO(n\cdot d) $. This choice of invariants is a natural generalization of the type of invariants encountered in phase retrieval (see discussion in Subsection~\ref{sub:related} and Remark~\ref{remark:phase}). It also can be seen as a realization of the invariant theory based methodology discussed in \ref{sub:method}. The ring of invariant polynomials is generated by the inner product polynomials $\langle x_i,x_j\rangle, 1\leq i\leq j \leq n$ \cite{weyl1946classical}. It is thus also generated by the polynomials 
	\begin{equation}\label{eq:norm_gen}
	\|x_i\|^2, i=1,\ldots,n \text{ and }  \|x_i-x_j\|^2 1\leq i<j\leq n \end{equation}
	since these polynomials have the same linear span as the inner product polynomials. These invariant are obtained from the squared norm invariant on $\RR^d$ by polarization, and so are all of the form \eqref{eq:pr} for an appropriate choice of $w\in \RR^n$, that is 
	$$p(X,w=e_i)=\|x_i\|^2 \text{ and } p(X,w=e_i-e_j)=\|x_i-x_j\|^2  .$$
Our result is now an easy consequence of the discussion so far and Theorem~\ref{thm:useful}:
	\begin{proposition}\label{prop:Od}
		Let $n\geq d$,  let $\M$ be a  semi-algebraic subset of $\RR^{d\times n}$ of dimension $\Dint$, which is stable under the action of $O(d)$. If $m\geq 2\Dint+1$ then 
		For Lebesgue almost every $w_1,\ldots,w_m $ in $\RR^n$ the invariant polynomials 
		$$X\mapsto \|Xw_i\|^2, \quad i=1,\ldots,m $$
		are  separating with respect to the action of $O(d)$.
	\end{proposition}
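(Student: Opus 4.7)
The plan is to apply Theorem~\ref{thm:useful} to the family $p(X,w) = \|Xw\|^2$. First I would verify the input hypotheses of the theorem. The function $p$ is a polynomial in the entries of $X$ and $w$, hence semi-algebraic on $\M \times \RR^n$. For invariance, note that for any $Q \in O(d)$ we have $\|QXw\|^2 = w^T X^T Q^T Q X w = w^T X^T X w = \|Xw\|^2$, so $p(\cdot, w)$ is $O(d)$-invariant for each fixed $w$.

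The substantive step is showing that $p$ strongly separates orbits on $\M$. Since $p(X,w)$ is polynomial in $w$ for each fixed $X$, by the remark following the definition of strong separation it suffices to show that $p$ merely separates orbits, and then strong separation follows from the fact that a nonzero polynomial has dimensionally deficient zero set. So fix $X, Y \in \M$ with $X \not\sim_{O(d)} Y$, and I must exhibit some $w \in \RR^n$ with $\|Xw\|^2 \neq \|Yw\|^2$. Writing $\|Xw\|^2 - \|Yw\|^2 = w^T(X^T X - Y^T Y) w$, this is a quadratic form in $w$, and it vanishes identically on $\RR^n$ if and only if the symmetric matrix $X^T X - Y^T Y$ is zero.

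The main (and only nontrivial) obstacle is thus to establish the Gram-matrix characterization of $O(d)$-orbits: for $X, Y \in \RR^{d\times n}$ with $n \geq d$, $X^T X = Y^T Y$ if and only if $Y = QX$ for some $Q \in O(d)$. The ``if'' direction is immediate. For the converse, when the Gram matrices agree, the columns $x_1,\ldots,x_n$ of $X$ and $y_1,\ldots,y_n$ of $Y$ satisfy $\langle x_i, x_j\rangle = \langle y_i, y_j\rangle$ for all $i,j$; therefore the linear map defined on the column span of $X$ by $x_i \mapsto y_i$ is a well-defined isometry onto the column span of $Y$, which extends (since $n \geq d$ allows the spans to sit inside $\RR^d$) to an element $Q \in O(d)$ with $QX = Y$. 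Equivalently, one can argue via the SVDs $X = U_X \Sigma V_X^T$ and $Y = U_Y \Sigma V_Y^T$ forced to share the same $\Sigma, V$-data by equality of $X^TX = Y^TY$, yielding $Q = U_Y U_X^T$.

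Combining these steps: if $X \not\sim_{O(d)} Y$ then $X^T X \neq Y^T Y$, so $w \mapsto w^T(X^TX - Y^TY)w$ is a nonzero polynomial on $\RR^n$, its zero set has dimension at most $n - 1$, and $p$ strongly separates orbits on $\M$. Theorem~\ref{thm:useful}, applied with $D_w = n$ and $\dim(\M) = \Dint$, then yields that for Lebesgue almost every $w_1, \ldots, w_m \in \RR^n$ with $m \geq 2\Dint + 1$, the functions $X \mapsto \|X w_i\|^2$ together separate $O(d)$-orbits in $\M$, as required. I would also note that this matches the approach outlined in Subsection~\ref{sub:method}: the generators $\|x_i\|^2, \|x_i - x_j\|^2$ of the $O(d)$-invariant ring are all specializations of this family at $w = e_i$ and $w = e_i - e_j$, so the family inherits separation from the generators for free.
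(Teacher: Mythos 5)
Your proposal is correct and follows essentially the same route as the paper: invoke Theorem~\ref{thm:useful}, reduce strong separation to separation using polynomiality in $w$, and settle separation by the fact that the Gram matrix $X^TX$ determines $X$ up to $O(d)$ (the paper's Lemma~\ref{lem:isometries}). Your direct observation that $\|Xw\|^2-\|Yw\|^2=w^T(X^TX-Y^TY)w$ is just a slightly streamlined phrasing of the paper's reduction through the norm polynomials $\|x_i\|^2$, $\|x_i-x_j\|^2$.
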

	\begin{proof}
		By Theorem~\ref{thm:useful} it is sufficient to show that the family of invariant functions $p$ is strongly separating, and as they are polynomials we only need to show separation. It is sufficient to show that the finite collection of polynomials in \eqref{eq:pr} are separating, which as mentioned above is equivalent to showing that the inner product polynomials $\langle x_i,x_j \rangle $ are separating. This is just the known fact that the Gram matrix $X^TX $ determines $X$ uniquely up to orthogonal transformation. see e.g., Lemma~\ref{lem:isometries} and its proof in the appendix.
	\end{proof}

	We note that Proposition~\ref{prop:Od} (with a slightly smaller number of separating invariants) can also be deduced immediately from Theorem 4.9 in \cite{rong2021almost} which discusses the equivalent problem of separating rank one matrices using rank one linear measurements.  
	\subsection{Special orthogonal invariance}\label{sub:SOd}
	We now turn to the action of the special orthogonal group $SO(d)=\{R\in O(d),det(R)=1\}$ on $\RR^{d\times n}$ by multiplication from the left. 
	The invariant ring for this group action is generated by the polynomials in \eqref{eq:norm_gen} together with the invariant polynomials  
	\begin{equation}\label{eq:det}
	[i_i,i_2,\ldots,i_d](X)=det(x_{i_1},x_{i_2},\ldots,x_{i_d}), \quad 1\leq i_1<i_2<\ldots<i_d\leq n .
	\end{equation}
	Accordingly, the generators can all be realized by specific choices of $(w,W)$ from the family of polynomial invariants 
	\begin{equation}\label{eq:sop}
	p(X,w,W)=\|Xw\|^2+det(XW), X\in \RR^{d\times n}, w\in \RR^n, W\in \RR^{n\times d} .
	\end{equation}
The complexity of calculating  each invariant (for fixed $w,W$) is dominated by the matrix product $XW$ which with the standard method for matrix multiplication requires $\bigO(n\cdot d^2)$ operations. 
	We can easily prove that this family of invariants separates orbits:
	\begin{proposition}
		Let $n\geq d$, and let $\M$ be a  semi-algebraic subset of $\RR^{d\times n}$ of dimension $\Dint$, which is stable  under the action of $SO(d)$. If $m\geq 2\Dint+1$ then 
		For Lebesgue almost every $(w_1,W_1),\ldots,(w_m,W_m) $ in $\RR^n\times \RR^{n\times d}$, the invariant polynomials 
		$$X\mapsto \|Xw_i\|^2+det(XW_i), \quad i=1,\ldots,m $$
		are separating with respect to the action of $SO(d)$.
	\end{proposition}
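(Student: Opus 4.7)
The plan is to apply Theorem~\ref{thm:useful}, following the same strategy as in the proof of Proposition~\ref{prop:Od}. Since $p(X,w,W)=\|Xw\|^2 + \det(XW)$ is a polynomial in the parameters $(w,W)$ for each fixed $X$, separation of the family automatically upgrades to strong separation (the zero set of a not-identically-zero polynomial is dimensionally deficient, as noted right after the definition of strong separation). So the entire proof reduces to verifying that the family $p$ separates $SO(d)$ orbits on $\M$.

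To check this, I would first observe that specific choices of $(w,W)$ recover the classical $SO(d)$-generators listed in \eqref{eq:norm_gen} and \eqref{eq:det}. Setting $W=0$ and letting $w$ range over $e_i$ and $e_i - e_j$ recovers the polarized squared norms $\|x_i\|^2$ and $\|x_i - x_j\|^2$, whose linear span coincides with that of the inner-product polynomials $\langle x_i, x_j\rangle$ (the $O(d)$-generators). Setting $w=0$ and taking $W$ to be the matrix with columns $e_{i_1},\ldots,e_{i_d}$ recovers each bracket polynomial $[i_1,\ldots,i_d](X)=\det(x_{i_1},\ldots,x_{i_d})$. Hence it is enough to show that the joint collection of inner-product polynomials together with all bracket polynomials separates $SO(d)$-orbits in $\M$.

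Now suppose $X,Y\in \M$ agree on all of these polynomial invariants. Since the inner-product polynomials agree on $X$ and $Y$, the argument used in the proof of Proposition~\ref{prop:Od} (namely, that the Gram matrix $X^TX$ determines $X$ up to multiplication by $O(d)$) produces an $R\in O(d)$ with $Y=RX$. If $\det R = 1$ we are done, so assume $\det R=-1$ and split on the rank of $X$. In the full-rank case $\rank(X)=d$, some $d\times d$ column submatrix of $X$ has nonzero determinant, so multiplicativity of the determinant gives $[i_1,\ldots,i_d](Y)=\det(R)\,[i_1,\ldots,i_d](X)=-[i_1,\ldots,i_d](X)\neq [i_1,\ldots,i_d](X)$, contradicting the assumed agreement of the bracket polynomials. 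In the rank-deficient case $\rank(X)<d$, pick a unit vector $u$ orthogonal to the column span of $X$ and let $R'=I-2uu^T$ be the reflection across $u^\perp$; then $R'X=X$ and $\det(R')=-1$, so $RR'\in SO(d)$ and $(RR')X=RX=Y$, placing $Y$ in the $SO(d)$-orbit of $X$.

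This verifies that $p$ separates orbits in $\M$, and, by the polynomial-in-$(w,W)$ observation, strongly separates them, so Theorem~\ref{thm:useful} yields the conclusion. There is no really hard step here; the one subtle point is the rank-deficient case, where every bracket polynomial vanishes identically and therefore cannot distinguish orbits. Fortunately this is exactly the regime in which a reflection fixes $X$ and hence the $O(d)$- and $SO(d)$-orbits of $X$ coincide, so the bracket polynomials are not needed there, and the $O(d)$-generators already do the job.
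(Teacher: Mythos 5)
Your proposal is correct and follows essentially the same route as the paper: reduce via Theorem~\ref{thm:useful} to separation of the polynomial family, realize the norm and bracket generators by specific parameter choices, use the Gram-matrix argument to get an orthogonal $R$, and handle $\det R=-1$ by the bracket polynomials in the full-rank case and by composing with a reflection fixing the column span in the rank-deficient case. The only difference is cosmetic: you argue directly from agreement of all invariants, while the paper exhibits a separating polynomial for each disjoint pair of orbits.
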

	\begin{proof}
		By Theorem~\ref{thm:useful} it is sufficient to show that the family of invariant functions $p$ is strongly separating, and as they are polynomials we only need to show separation. Let $X,Y\in \RR^{d\times n}$ which do not have the same orbit. If $X$ and $Y$ are not related by any orthogonal transformation then we already showed that they can be separated by the `norm polynomials'. 	We now need to consider the case where $X$ and $Y$ are not related by a rotation, but are related by $X=RY$ where $R$ is orthogonal with   $det(R)=-1$. In this case we see that $X$ and $Y$ have the same rank. Moreover, they must be full rank, since otherwise we can multiply $R$ by an orthogonal transformation $R_0$ with $det(R_0)=-1$ which fixes the column span of $Y$ and obtain $X=RR_0Y$ and $det(RR_0)=1$, in contradiction to the fact that $X$ and $Y$ do not have the same $SO(d)$ orbit. Since $X$ is full rank, we can choose $1\leq i_1<\ldots<i_d\leq n$ such that $[i_1,\ldots,i_d](X)\neq 0$. This polynomial will separate $X$ and $Y$ since
		$$-[i_1,\ldots,i_d](X)\neq [i_1,\ldots,i_d](X)=[i_1,\ldots,i_d](RY)=-[i_1,\ldots,i_d](Y) $$
		
	\end{proof}

	\begin{remark}\label{remark:phase}
	For $d=2$ there are more efficient invariants than the ones we suggest here: as mentioned previously, known results \cite{conca2015algebraic,balan2006signal} on complex phase retrieval state that for generic $m=4n-4$ complex vectors $w^{(1)},\ldots,w^{(m)}$ in $\CC^n$, the maps 
	\begin{equation}\label{eq:s1}\CC \ni z\mapsto |\langle z, w_j \rangle | 
		\end{equation}
	separate orbits of the action of $S^1$ on $\CC^n$. Note that the linear map  $\langle z,w_j \rangle  $ is $S^1$ equivariant, that is 
	$$\langle \xi z,w_j \rangle=\xi \langle  z,w_j \rangle, \forall \xi \in S^1  $$
	 Identify $\CC^n \cong \RR^{2\times n}$ and $S^1\cong SO(2) $, we see that there are $m$ linear $SO(2) $ equivariant  maps $W^{(1)},\ldots,W^{(m)}$ from $\RR^{2\times n}$ to $\RR^2$, modeling multiplication in $\CC^1$, and  
	\begin{equation}\label{eq:so2}
	\RR^{2\times n} \ni X \mapsto \|XW^{(j)}\| \end{equation}
	is $SO(2)$ invariant and separate orbits. Each one of these linear maps $W^{(j)} $ is parameterized by $2n$ real numbers, while our invariants in \eqref{eq:sop} are parameterized by $3n$ parameters (when $d=2$).
	
	When $d\neq2$, it would be natural to look for separating invariants of the form \eqref{eq:so2}, where the $W^{(j)}$ are $SO(d)$ equivariant linear maps from $\RR^{d\times n}$ to $\RR^d$, and avoid the additional determinant term we use in \eqref{eq:sop}. However, in Proposition~\ref{prop:equiv} in the appendix we show that when $d\neq 2$, the only linear $SO(d)$ equivariant maps   are of the form $X\mapsto Xw$ with $w\in \RR^n$. These maps are also $O(d)$ equivariant, and as a result $\|Xw\|$ is $O(d)$ invariant. It follows that these maps cannot separate point clouds which are related by reflections but do not have the same orbit in $SO(d)$.      
	\end{remark}
	
	\subsection{Isometry groups for non-degenerate bi-linear forms}
	The next examples we consider are isometry groups for non-degenerate bi-linear forms. As usual we assume $n\geq d$, and we are given a symmetric invertible $Q\in \RR^{d\times d}$ which induces a symmetric bi-linear form 
	$$\langle x,Qy\rangle, x,y\in \RR^d.$$ 
	We define a  $Q$-isometry as a matrix $U\in \RR^{d\times d}$ such that $U^TQU=Q$, and thus the symmetric bi-linear form defined by $Q$ is preserved by $U$:
	$$\langle Ux,QUy\rangle=\langle x,U^TQUy\rangle=\langle x,Qy\rangle $$
	The set of $Q$-isometries is a subgroup of $GL(\RR^d)$ which we denote by $O_Q(d)$. The orthogonal group $O(d)$ discussed earlier corresponds to $Q=I_d$. Indefinite orthogonal groups $O(s,d-s) $ correspond to diagonal $Q$ matrices with $s$ positive unit entries and $d-s$ negative unit entries. In particular the Lorenz group $O(3,1)$ (together with translations) is an important symmetry group in special relativity, and has been discussed in the context of invariant machine learning for physics simulations \cite{villar2021scalars,bogatskiy2020lorentz}. 

    We consider the task of finding separating invariants for the action of $O_Q(d)$ on $\RR^{d\times n}$ by multiplication from the left. A natural place to start is the $Q$ Gram matrix $X^TQX $, whose coordinates are  the $Q$-inner products $\langle x_i,Qx_j\rangle, 1\leq i\leq j \leq n $. Indeed, at least for the Lorenz group it is known that the inner product polynomials are indeed generators \cite{villar2021scalars}. When $Q$ is positive definite, the $Q$-inner products do indeed separate orbits. When $Q$ is not positive definite, this is no longer always true: consider the following example with $d=2, n=2$ and  
	\begin{equation*}
	Q=\begin{bmatrix}
	1 & 0\\0& -1
	\end{bmatrix},
\quad 
X=\begin{bmatrix}
	0 & 0\\0& 0
\end{bmatrix},
\quad 
Y=\begin{bmatrix}
	1 & 1\\1& 1
\end{bmatrix},
\quad 
	\end{equation*} 
We see that the $Q$-Gram matrix of $X$ and $Y$ is both zero, while $X$ and $Y$ cannot be related by a $Q$-isometry since $Q$-isometries are invertible. However, the $Q$-inner products do separate orbits when restricted to the set of full rank matrices  $\RR^{d\times n}_{full}$.  The following lemma, proved in the appendix, formulates this claim. The proof is essentially taken from \cite{gortler2014generic} corollary 8. 
	\begin{lemma}\label{lem:isometries}
		Assume $Q\in \RR^{d\times d}$ is a symmetric invertible matrix and $X,Y\in \RR^{d\times n}$ have the same $Q$-Gram matrix. If (i) $X$ has rank $d$ or (ii) $Q$ is positive definite,
then $X$ and $Y$ are related by a $Q$-isometry.
\end{lemma}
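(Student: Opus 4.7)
The plan is to prove, in each of the two cases, the stronger constructive statement: given the hypothesis $X^T Q X = Y^T Q Y$, one can explicitly exhibit a matrix $U \in O_Q(d)$ such that $Y = U X$. Both cases use the general principle that a linear map which intertwines two bilinear-form measurements must itself respect the form $Q$, but the constructions of $U$ are different.

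First I would tackle case (i), where $X$ has rank $d$. Since $n \geq d$, by permuting columns (a right $S_n$ action that preserves both the hypothesis and the conclusion), I may assume the first $d$ columns form an invertible block $X_1 \in \RR^{d \times d}$, with corresponding block $Y_1$ of $Y$. The top-left $d \times d$ sub-block of the Gram identity reads $X_1^T Q X_1 = Y_1^T Q Y_1$; invertibility of $Q$ and $X_1$ forces the left-hand side to be invertible, hence $Y_1$ is invertible too. I would then define $U := Y_1 X_1^{-1}$ and verify directly that $U^T Q U = (X_1^{-1})^T (Y_1^T Q Y_1) X_1^{-1} = (X_1^{-1})^T (X_1^T Q X_1) X_1^{-1} = Q$, so $U \in O_Q(d)$. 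To extend the equality $Y_1 = U X_1$ to the remaining columns, write each $X_j = X_1 c_j$ for some $c_j \in \RR^d$ (possible because the first $d$ columns span $\mathrm{col}(X)$). The full Gram identity $Y_i^T Q Y_j = X_i^T Q X_j$ for $i \leq d$ then gives $Y_i^T Q (Y_j - Y_1 c_j) = 0$ for all $i \leq d$; since the rows of $Y_1^T Q$ span $\RR^d$, this forces $Y_j = Y_1 c_j = U X_j$.

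For case (ii), where $Q$ is positive definite, I would factor $Q = A^T A$ with $A \in \RR^{d \times d}$ invertible (e.g., Cholesky, or the principal square root of $Q$). The hypothesis becomes $(A X)^T (A X) = (A Y)^T (A Y)$, reducing the claim to the classical Euclidean statement that two point clouds with equal Gram matrices differ by an orthogonal transformation. I would prove this by defining a linear map $V$ on the subspace $\mathrm{col}(A X) \subseteq \RR^d$ by $V(A X v) = A Y v$; this is well-defined and isometric because $\|A Y v\|^2 = v^T Y^T Q Y v = v^T X^T Q X v = \|A X v\|^2$, and a standard extension of isometries produces an orthogonal $V \in O(d)$ on all of $\RR^d$. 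Setting $U := A^{-1} V A$, a direct computation gives $U^T Q U = A^T V^T V A = A^T A = Q$ and $U X = A^{-1} V (A X) = A^{-1} (A Y) = Y$.

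The main obstacle is really only careful bookkeeping rather than any deep step: in case (i), propagating the equality $Y_1 = U X_1$ from the first $d$ columns to the remaining ones via the full Gram identity; and in case (ii), proving the classical fact that equal Euclidean Gram matrices imply an orthogonal relation, which requires the well-definedness of $V$ on $\mathrm{col}(A X)$ and its extension to all of $\RR^d$. The latter step matters because case (ii) does not assume full rank for $X$, so $V$ is initially defined only on a proper subspace.
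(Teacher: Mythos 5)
Your proposal is correct. Case (i) is essentially the paper's own argument, just written in matrix form: the paper defines $U$ on a spanning set of $d$ columns by $Ux_i=y_i$ and then propagates to the remaining columns by showing the coordinate vectors $\alpha,\beta$ of $x_j$ and $y_j$ agree, using invertibility of $\bar X^TQ\bar X$; your formula $U=Y_1X_1^{-1}$ and the identity $Y_1^TQ(Y_j-Y_1c_j)=0$ are the same computation. Case (ii) is where you genuinely diverge: the paper stays intrinsic to the form $Q$, defining a $Q$-isometric map from $\mathrm{col}(X)$ onto $\mathrm{col}(Y)$ (after noting $\rank(X)=\rank(X^TQX)=\rank(Y^TQY)=\rank(Y)$ by positive definiteness) and extending it by sending the complement of $\mathrm{col}(X)$ to the complement of $\mathrm{col}(Y)$, then reusing the same $\alpha=\beta$ propagation as in case (i); you instead factor $Q=A^TA$ and reduce to the classical Euclidean statement, defining $V(AXv)=AYv$ on $\mathrm{col}(AX)$ and conjugating back via $U=A^{-1}VA$. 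Your route makes the extension step a textbook fact about Euclidean isometries between subspaces of equal dimension and handles all columns at once through the well-definedness check (no separate propagation argument is needed in this case), at the cost of introducing the factorization; the paper's route avoids any factorization and keeps a uniform treatment of the two cases, but must implicitly justify that the extension across the ($Q$-)orthogonal complements is again a $Q$-isometry, which positive definiteness guarantees. Both arguments are complete and correct.
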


 Once we know that the $Q$ inner products separate orbits on $\RR^{d\times n}_{full}$, we proceed as we did for $O(d)$. We see that the `Q-norm polynomials' 
$$\langle x_i,Qx_i \rangle, i=1,\ldots,n \text{ and } \langle x_i-x_j,Q(x_i-x_j) \rangle, 1\leq i <j\leq n  $$
span the $Q$-inner product polynomial and hence are also separating on $\RR_{full}^{d\times n}$. We can then prove an analogue of Proposition~\ref{prop:Od} using the same arguments used there:

	\begin{proposition}\label{prop:OQd}
		Let $n\geq d$, let $Q\in \RR^{d\times d}$ be symmetric and invertible, and let $\M$ be a  semi-algebraic subset of $\RR^{d\times n}_{full}$ of dimension $\Dint$, which is stable under the action of $O_Q(d)$. If $m\geq 2\Dint+1$  then 
		For Lebesgue almost every $w_1,\ldots,w_m $ in $\RR^n$ the invariant polynomials 
		$$X\mapsto \langle Xw_i,QXw_i\rangle, \quad i=1,\ldots,2nd+1 $$
		are  separating with respect to the action of $O_Q(d)$.
	\end{proposition}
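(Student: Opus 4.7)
The plan is to mirror the proof of Proposition~\ref{prop:Od}, with Lemma~\ref{lem:isometries} supplying the $Q$-analog of the Gram matrix recovery step. Specifically, I will check that the parametric family
$$p(X,w)=\langle Xw,QXw\rangle$$
is a family of $O_Q(d)$-invariant polynomial functions, and then show it separates orbits on $\M$. Since $p(X,\cdot)$ is a polynomial in $w$ for each fixed $X$, separation automatically upgrades to strong separation (the zero set of a nontrivial polynomial has strictly smaller dimension), and Theorem~\ref{thm:useful} then yields the claim for $m\geq 2\Dint+1$ random parameters.

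Invariance is immediate from the definition of $O_Q(d)$: for $U\in O_Q(d)$ and any $w\in \RR^n$,
$$p(UX,w)=\langle UXw,QUXw\rangle=\langle Xw, U^{T}QU Xw\rangle=\langle Xw,QXw\rangle=p(X,w).$$
For separation, suppose $X,Y\in \M\subseteq\RR^{d\times n}_{full}$ have disjoint $O_Q(d)$-orbits. By Lemma~\ref{lem:isometries}, since $X$ is full rank, the $Q$-Gram matrix $X^{T}QX$ determines $X$ up to an element of $O_Q(d)$. Hence $X^{T}QY\neq Y^{T}QY$ coordinatewise is false; rather $X^{T}QX\neq Y^{T}QY$ as matrices, so some entry $\langle x_i,Qx_j\rangle$ differs from $\langle y_i,Qy_j\rangle$.

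To convert this into a statement about $p(\cdot,w)$ for some $w\in \RR^n$, I use the polarization identity
$$\langle x_i,Qx_j\rangle=\tfrac{1}{2}\bigl(\langle x_i+x_j,Q(x_i+x_j)\rangle-\langle x_i,Qx_i\rangle-\langle x_j,Qx_j\rangle\bigr),$$
so every entry of the $Q$-Gram matrix lies in the linear span of the ``$Q$-norm polynomials'' $p(X,e_i)=\langle x_i,Qx_i\rangle$ and $p(X,e_i+e_j)=\langle x_i+x_j,Q(x_i+x_j)\rangle$. (Equivalently one can use $e_i-e_j$ as in the setup preceding the proposition.) Because $X^{T}QX\neq Y^{T}QY$, at least one of these finitely many special values of $w$ must separate $X$ from $Y$, so the family $\{p(X,w)\}_{w\in \RR^n}$ separates orbits on $\M$.

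The main (minor) subtlety is the full-rank hypothesis: without $\M\subseteq\RR^{d\times n}_{full}$, Lemma~\ref{lem:isometries} can fail for indefinite $Q$, as already illustrated by the $2\times 2$ counterexample preceding the lemma, and the argument above would break down at the Gram-recovery step. Granting this hypothesis, everything else is routine: strong separation follows from polynomiality of $p$ in $w$, and Theorem~\ref{thm:useful} delivers the conclusion that Lebesgue almost every choice of $w_1,\ldots,w_m\in\RR^n$ with $m\geq 2\Dint+1$ produces a separating family.
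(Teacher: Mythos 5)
Your proposal is correct and follows essentially the same route as the paper: invariance from $U^TQU=Q$, Lemma~\ref{lem:isometries} to get that the $Q$-Gram matrix separates orbits on $\RR^{d\times n}_{full}$, polarization to realize the $Q$-Gram entries as linear combinations of the values $p(X,w)$ at finitely many special $w$, and then polynomiality in $w$ plus Theorem~\ref{thm:useful} (exactly as in Proposition~\ref{prop:Od}). The only cosmetic difference is your use of $e_i+e_j$ in place of the paper's $e_i-e_j$, which is immaterial.
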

	\subsection{Special linear invariance}\label{sub:SL}
We now give a full treatment for the group action described in Example\ref{ex:SLd}. 	We consider the action of the special linear group $SL(d)=\{A\in \RR^{d\times d}| \, \det(A)=1 \} $ 
	on $\RR^{d \times n}$ by multiplication from the left. The generators for the ring of invariants is given by the determinant polynomials \cite{kraftinvariant}
\begin{equation}\label{eq:det2}
[i_1,\ldots,i_d](X)=\det(x_{i_1},\ldots,x_{i_d}) , 1\leq i_1<i_2<\ldots<i_d\leq n ,
\end{equation}
	which we have already encountered in Subsection~\ref{sub:SOd}. The generators  cannot separate matrices in $\RR^{d\times n}$ which are not full rank, since for such matrices we will always get zero determinants. In Proposition~\ref{prop:special} in the appendix we give an elementary proof that the determinant polynomials from \eqref{eq:det2} separate orbits on $\RR_{full}^{d \times n}$. 
	
	The separation of the determinant polynomials in \eqref{eq:det2} together with Theorem~\ref{thm:useful} implies
	\begin{proposition}\label{prop:SL}
		Let $n\geq d$, and let $\M$ be a  semi-algebraic subset of $\RR^{d\times n}_{full}$ of dimension $\Dint$, which is stable under the action of $SL(d)$. If $m\geq 2\Dint+1$  then 
		For Lebesgue almost every $W_1,\ldots,W_m $ in $\RR^{n\times d}$ the invariant polynomials 
		$$X\mapsto \det(XW_i), \quad i=1,\ldots,m $$
		are separating with respect to the action of $SL(d)$.
	\end{proposition}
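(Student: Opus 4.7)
The plan is to apply Theorem~\ref{thm:useful} directly to the one-parameter family
\[
p: \M \times \RR^{n \times d} \to \RR, \qquad p(X, W) = \det(XW).
\]
First I would verify the hypotheses on $p$. For every fixed $W$ and every $A \in SL(d)$ we have $\det((AX)W) = \det(A)\det(XW) = \det(XW)$, so $p(\cdot, W)$ is $SL(d)$-invariant on $\M$. For fixed $X$, the map $W \mapsto \det(XW)$ is a polynomial in the $nd$ entries of $W$, hence semi-algebraic; thus $p$ is a family of $SL(d)$-invariant semi-algebraic functions in the sense of the earlier definition.

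Next I would establish strong separation on $\M$. Recall the observation made just after the definition of strong separation: if $p(X, \cdot)$ is polynomial in $W$ for each fixed $X$, then ordinary separation automatically upgrades to strong separation, because the zero set of a nonzero polynomial has strictly deficient dimension. It therefore suffices to show that for every $X, Y \in \M \subseteq \RR^{d\times n}_{full}$ with disjoint $SL(d)$-orbits, \emph{some} $W \in \RR^{n \times d}$ satisfies $\det(XW) \neq \det(YW)$. By Proposition~\ref{prop:special} in the appendix, the determinant polynomials $[i_1,\ldots,i_d]$ from \eqref{eq:det2} already separate $SL(d)$-orbits on $\RR^{d\times n}_{full}$, so there exist indices $1 \leq i_1 < \cdots < i_d \leq n$ with $\det(x_{i_1},\ldots,x_{i_d}) \neq \det(y_{i_1},\ldots,y_{i_d})$. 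Taking $W$ to be the $n \times d$ matrix whose columns are the standard basis vectors $e_{i_1}, \ldots, e_{i_d} \in \RR^n$ gives $XW = (x_{i_1},\ldots,x_{i_d})$ and the same for $Y$, and hence $p(X, W) \neq p(Y, W)$ as required.

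With strong separation in hand, Theorem~\ref{thm:useful} applied with $D_w = nd$ yields immediately that for Lebesgue almost every choice of $W_1, \ldots, W_m$ in $\RR^{n \times d}$ with $m \geq 2\Dint + 1$, the invariants $X \mapsto \det(XW_i)$ jointly separate $SL(d)$-orbits on $\M$. The only real content beyond Theorem~\ref{thm:useful} is the reduction of orbit separation to the finite family of minor determinants, i.e.\ Proposition~\ref{prop:special}; this is the step I would expect to be the main obstacle, and it rests on the general principle (discussed at the end of Subsection~\ref{sub:method}) that for the closed non-compact group $SL(d)$ acting on full-rank matrices, orbits are closed and hence algebraic separation by the generators of the invariant ring coincides with genuine orbit separation.
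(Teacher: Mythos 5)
Your proposal is correct and follows essentially the same route as the paper: invoke Proposition~\ref{prop:special} to get that the determinant polynomials in \eqref{eq:det2} separate $SL(d)$-orbits on $\RR^{d\times n}_{full}$, note these are realized by special choices of $W$ so the family $p(X,W)=\det(XW)$ separates, upgrade to strong separation via polynomiality in $W$, and conclude with Theorem~\ref{thm:useful}. The explicit choice of $W$ with columns $e_{i_1},\ldots,e_{i_d}$ is exactly the intended realization, so there is nothing to add.
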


\subsection{Translation}
We consider the action of $\RR^d$ on $\RR^{d\times n}$ by translation: 
$$t_*(X)=X+t1_n^T. $$
 We can easily compute $n\cdot d$ separating invariants for this action: examples include the mapping $ X \mapsto X-x_11_n^T $ suggested in \cite{villar2021scalars}  or the centralization mapping $cent(X)= X-\frac{1}{n}X1_n1_n^T$. The centralization mapping is equivariant w.r.t the action of multiplication by a matrix $A\in GL(\RR^d)$ from the left, and a permutation matrix $P$ from the right. That is 
$$cent(AXP^T)=AXP^T-\frac{1}{n}AXP^T1_n1_n^T=AXP^T-\frac{1}{n}AX1_n1_n^T=AXP^T-\frac{1}{n}AX1_n1_n^TP^T=Acent(X)P^T .$$
It follows that if $f:\RR^{d\times n}\to \RR^m$ is invariant with respect to some group $G$ which is a subgroup of $GL(\RR^d)\times S_n$, then $f(cent(X))$ will be invariant with respect to the group $\langle G,\RR^d \rangle  $ generated by $G$ and the translation group. Additionally, if $f$ separates orbits w.r.t the action of $G$, then $f$ separates orbits with respect to the action of $\langle G,\RR^d \rangle  $. To see this, note that if $X,Y\in \RR^{d\times n}$ and $f(cent(X)=f(cent(Y)) $, then since $f$ separates orbits we have that there exist some $(A,P)\in G\leq GL(\RR^d)\times S_n$ such that $cent(X)=Acent(Y)P^T $, and so $X$ is obtained from $Y$ by translation by the mean of $Y$, follows by a $G$ action, and translation by the mean of $X$.    
  
\subsection{Scaling}
We consider the action of $\RR_{>0}=\{x>0\}$ on $\RR^{d\times n}$ by scaling (scalar-matrix multiplication). In this case there are no non-constant invariant polynomials, or in fact any non-constant invariants which are continuous on all of $\RR^{d\times n}$. This is because the orbit of each $X\in \RR^{d\times n}$ contains the zero matrix $0\in \RR^{d\times n}$ in its closure. However, we can easily come up with non-polynomial separating invariants with singularities at zero, such as $X\mapsto \|X\|^{-1} X$, where $\|\cdot \|$ denotes some norm on $\RR^{d\times n}$. If we choose the Frobenius  norm $\|\cdot\|_F$, this mapping is equivariant with respect to multiplication by an orthogonal matrix from the left and a permutation matrix from the right. As a result, if $f:\RR^{d\times n}\to \RR^m$ is invariant with respect to some group $G$ which is a subgroup of $O(d)\times S_n$, then $X\mapsto f(\|X\|_F^{-1} X)$ will be invariant with respect to the group generated by $G$ and the scaling group. Additionally, if $f$ separates orbits with respect to the $G$ action, then $X\mapsto f(\|X\|_F^{-1} X)$ separates orbits with respect to the group generated by $G$ and the scaling group. 

\subsection{General Linear invariance}
We consider the problem of finding separating invariants for the action of the general linear group $GL(\RR^d) $ on $\RR^{d\times n}$ by multiplication from the left. There are no non-constant  polynomial invariants for this action, since this is the case even for the scaling group which  is a subgroup of $GL(\RR^d)$. We consider a family of rational invariants 
$$q(X,W)=\frac{det^2(XW)}{det(XX^T)}, \quad X\in \RR^{d\times n}, W\in \RR^{n\times d}$$
The function $q$ is well defined on  $\RR^{d\times n}_{full} \times \RR^{n\times d}$, and for fixed $W$ the function $X\mapsto q(X,W)$ is $GL(\RR^d) $-invariant. We prove: 

\begin{proposition}\label{prop:GL}
	Let $n\geq d$,  let $\M$ be a  semi-algebraic subset of $\RR_{full}^{d\times n}$ of dimension $\Dint$, which is stable under the action of $GL(\RR^d)$. If $m\geq 2\Dint+1$, then 
	for Lebesgue almost every $W_1,\ldots,W_m $ in $\RR^{n\times d}$ the invariant polynomials 
	$$X\mapsto q(X,W_i), \quad i=1,\ldots,m $$
	are  separating with respect to the action of $GL(\RR^d)$.
\end{proposition}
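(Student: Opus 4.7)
The plan is to apply Theorem~\ref{thm:useful}. First I would verify the easy prerequisites: $q(X,W)$ is rational with denominator $\det(XX^T)$ nonzero on $\RR^{d\times n}_{full}$, hence semi-algebraic on $\M\times\RR^{n\times d}$; and $GL(\RR^d)$-invariance follows directly from $\det(AXW)^2=\det(A)^2\det(XW)^2$ and $\det(AX(AX)^T)=\det(A)^2\det(XX^T)$, so the $\det(A)^2$ factors cancel. For fixed $X\in\M$, $q(X,W)$ is a polynomial in $W$ of degree $2d$ (the denominator being a nonzero constant in $W$), so $q(X,W)-q(Y,W)$ is polynomial in $W$; hence strong separation reduces to showing this polynomial is not identically zero whenever the orbits $[X]\neq[Y]$.

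The core step is a Cauchy--Binet computation. Set $N=\binom{n}{d}$ and let $p(X)=(\det(X^I))_{|I|=d}\in\RR^N$ be the Plücker vector of the $d\times d$ column-minors of $X$, and similarly let $\pi(W)=(\det(W_I))_{|I|=d}\in\RR^N$ for the row-minors of $W$. Cauchy--Binet yields
\begin{equation*}
\det(XW)=\langle p(X),\pi(W)\rangle,\qquad \det(XX^T)=\|p(X)\|^2,
\end{equation*}
so that $q(X,W)=\langle\hat p(X),\pi(W)\rangle^2$, where $\hat p(X)=p(X)/\|p(X)\|$ is a unit Plücker vector (well-defined up to a sign that is killed by the square). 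By classical Grassmannian theory, the left $GL(\RR^d)$-orbit of a full-rank $X\in\RR^{d\times n}_{full}$ is determined by the row span of $X$, and this row span is determined up to a nonzero scalar by $p(X)$; hence $[X]\neq[Y]$ is equivalent to $\hat p(X)\neq\pm\hat p(Y)$.

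Writing $u=\hat p(X)$ and $v=\hat p(Y)$, one has the factorization
\begin{equation*}
q(X,W)-q(Y,W)=\langle u+v,\pi(W)\rangle\cdot\langle u-v,\pi(W)\rangle,
\end{equation*}
a product of two degree-$d$ polynomials in $W$. Each factor $\langle u\pm v,\pi(W)\rangle$ is identically zero in $W$ iff $u\pm v$ is orthogonal to the linear span of the image of $\pi$. But that image contains every standard basis vector $e_I\in\RR^N$: for $I=\{i_1<\cdots<i_d\}$, take $W$ with $d$ columns equal to $e_{i_1},\ldots,e_{i_d}\in\RR^n$, so that $\det(W_J)=\delta_{IJ}$. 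Thus the image of $\pi$ spans $\RR^N$, and so both factors are non-zero polynomials (since $u\neq\pm v$), and their product is non-zero. This gives strong separation, and Theorem~\ref{thm:useful} applied with $D_w=nd$ concludes the proof. The main conceptual step, and the only place where anything beyond bookkeeping is needed, is recognizing the Plücker/Cauchy--Binet structure that rewrites $q(X,W)$ as a squared inner product against a spanning family; after that the argument is essentially the same two-factor trick that appeared in the phase-retrieval-inspired analysis earlier in the paper.
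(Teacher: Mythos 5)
Your proof is correct, but it takes a genuinely different route from the paper's. The paper argues by elementary linear algebra: assuming (WLOG) the first $d$ columns of $X$ are independent, it either separates with $W_0=[I_d,\,0]^T$ directly, or builds the matrix $A$ with $Ax_i=y_i$ for $i\le d$, finds a column $j>d$ where $Ax_j\neq y_j$, and constructs an explicit column-operation matrix $W_1$ so that $q(Y,W_1W_0)=0\neq q(X,W_1W_0)$; no classical invariant theory is invoked, in keeping with the paper's stated preference for self-contained arguments. You instead pass through Cauchy--Binet and the Pl\"ucker embedding: writing $q(X,W)=\langle \hat p(X),\pi(W)\rangle^2$ with $\hat p(X)\in\RR^{\binom{n}{d}}$ the normalized vector of maximal minors, identifying $GL(\RR^d)$-orbits of full-rank matrices with row spans and hence (by injectivity of the Pl\"ucker embedding) with $\pm\hat p(X)$, and then using the difference-of-squares factorization together with the observation that the minor map $\pi$ hits every standard basis vector $e_I$. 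All the steps check out (including the spanning claim: the $W$ with columns $e_{i_1},\ldots,e_{i_d}$ indeed gives $\pi(W)=e_I$), and since $q(X,\cdot)-q(Y,\cdot)$ is a nonzero polynomial in $W$ whenever the orbits differ, separation upgrades to strong separation and Theorem~\ref{thm:useful} applies exactly as you say. What each approach buys: the paper's is entirely elementary and constructive (it exhibits a concrete separating $W$), whereas yours imports one classical fact (Pl\"ucker coordinates determine the row span) but in exchange exposes the structure of the problem --- orbit separation for $GL(\RR^d)$ becomes literally real phase retrieval on the unit Pl\"ucker vector modulo sign, tying this example cleanly to the phase-retrieval arguments in Subsection~\ref{sub:related}, and it also records explicitly that $q(X,\cdot)$ is a degree-$2d$ polynomial in $W$, which is the kind of degree information used in Section~\ref{sec:computer_numbers}. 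One cosmetic remark: $\hat p(X)$ is well defined as a function of the matrix $X$; the sign ambiguity you mention arises only when viewing it as a function of the orbit, which is indeed exactly what the square kills.
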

\begin{proof}
	By Theorem~\ref{thm:useful} it is sufficient to show that the family of rational functions $q$ is strongly separating. In fact since $q(X,W)$ is polynomial in $W$ for every fixed $X$, it is sufficient to show orbit separation.  
	
	Let $X,Y\in \RR_{full}^{d\times n}$ be two full rank point clouds whose orbits do not intersect.
	Since $X$ is full rank, it has $d$ columns which are linearly independent, for simplicity of notation we assume these are the first $d$ columns. If the first $d$ columns of $Y$ are not linearly independent, then by choosing $W_0=[I_d, \,  0]^T$  we get that
	$$q(X,W_0)= \frac{\left[det(x_1,\ldots,x_d) \right]^2}{det(XX^T)}$$
	is zero on $Y$ and not on $X$ and so it separates the two points. Thus we can assume that the first $d$ columns of $Y$ are linearly independent. It follows that the matrix $A$ defined uniquely by the equations $Ax_i=y_i, i=1,\ldots,d$ is non-singular.
	
	By assumption, $AX\neq Y$ so  there exists some index $j, d<j\leq n$ such that $Ax_j\neq  y_j$. Since $y_1,\ldots,y_d$ span $\RR^d$, there exist $\alpha_1,\ldots,\alpha_d$ and $\beta_1,\ldots,\beta_d$ such that 
	$$Ax_j=\sum_{i=1}^d \alpha_iy_i, \quad  y_j=\sum_{i=1}^d \beta_iy_i  $$
	and since $Ax_j\neq y_j$ there exists some $k, 1\leq k \leq d$ such that $\alpha_k\neq \beta_k$. Let $W_1\in \RR^{n\times n}$ be a matrix such that for all $Z=[z_1,\ldots,z_n]\in \RR^{d\times n}$ we have 
	$$ZW_1=[z_1,z_2,\ldots,z_{k-1},\beta_k z_k-z_j,z_{k+1},\ldots,z_n] .$$
	Then the first $d$ columns of $YW_1$ have rank $d-1$, while the first $d$ columns of $AXW_1$, and therefore also of $XW_1$, have rank $d$. It follows that $$q(X,W_1W_0)=q(XW_1,W_0)\neq 0= q(YW_1,W_0)=q(Y,W_1W_0).$$
	Thus we have shown that $q$ separates orbits.  
\end{proof}

\subsection{Intractable separation for permutation actions on graphs}\label{sub:graphs}
	Consider the action of the permutation group $S_n$ on the vector space $\RR^{n\times n}$ by conjugation: given a permutation matrix $P\in S_n$, and a matrix $X\in \RR^{n\times n} $, this action is defined as 
$$P_*X=PXP^T .$$
If $A$ is an adjacency matrix of a graph, applying a relabeling $\sigma  $ to the vertices, creates a new graph, isomorphic to the previous one, whose adjacency matrix  $A'$ is equal to $A'= PAP^T $ for $P$ the matrix representation of the permutation $\sigma$. We are thus interested in studying this action on the set of weighted adjacency matrices $\Mweight$ defined as
\begin{equation}\label{eq:Mweight}
\Mweight=\{A\in \RR^{n\times n}| A=A^T, A_{ii}=0 \text{ and } A_{ij}\geq 0, \forall i,j=1,\ldots,n\}.
\end{equation}

We note that $\Mweight$ is stable under the action of $S_n$, and has  dimension $\nweight=(n^2-n)/2 $. 

More generally, we will want to think of this action of $S_n$ on $S_n$ stable semi-algebraic subsets $\M$ of $\Mweight$ of arbitrary dimension $\Dint$. For example, the collection of all binary (unweighted) graphs can be parameterized by the finite $S_n$ stable set
	$$\Mbinary=\{A\in \Mweight| \, A_{ij}\in \{0,1 \}, \forall i,j=1,\ldots,n \}.$$
Another natural example includes (weighted or unweighted) graphs of bounded degree. 

Let us now consider the task of constructing separating invariants for the action of $S_n$ on a semi-algebraic stable subset $\M\subseteq \Mweight$ of dimension $\Dint$ . As our discussion suggest, we will be able to find such separating invariants of dimension $2\Dint+1$. However, the computational effort involved in computing the invariants in our constructions grows superpolynomially in $n$. This is not surprising as a polynomial time algorithm for computing separating invariants for the action of $S_n$ on $\Mbinary$, will lead to a polynomial time algorithm for the notoriously hard Graph Isomorphism problem (see \cite{grohe2020graph}).

One simple separating family for the action of $S_n$ on $\Mweight$ is polynomials of the form 
$$p(X,W)=\prod_{P\in S_n} \|PXP^T-W\|_F^2, \, X\in \Mweight, W\in \RR^{n\times n} .$$
Clearly for fixed $W$ the polynomials $X\mapsto p(X,W) $ is permutation invariant, and separation follows from the fact that if $X,Y \in \Mweight $ and $X \not \sim Y $, then taking $W=X$ we obtain
$$p(X,W)=0\neq p(Y,W) .$$
Thus by Theorem~\ref{thm:useful}, we can obtain $m=2\Dint+1$  separating invariants for the action of $S_n$ on $\M$,    as for  Lebesgue almost every $W_i, i=1,\ldots,m $ in $\left(\RR^{n \times n}\right)^m $, the functions 
$$X \mapsto p(X,W_i), i=1,\ldots,m $$
are invariant and separating. Note however that the degree of these polynomials is $2\cdot n!$ and so computing these invariants is not tractable.

	\section{Generic separation}\label{sec:generic}
	Generic separation is a relaxed notion of separability which is often easier to achieve than full separation:
	\begin{definition}[Generic separation]\label{def:generic}
		Let $G$ be a group acting on a semi-algebraic set $\M$. Let $\Y$ be a  set, and let $f:\M\to \Y$ be an invariant function. We say that $f$ is generically separating on $\M$, with singular set $\N$, if   $\N\subseteq \M$ is a semi-algebraic set which is stable under the action of $G$, satisfies $\dim(\N)<\dim{\M}$, and for every $x\in \M \setminus \N $, if there exists some $y\in \M$ such that $f(x)=f(y) $, then $x \sim_G y $.  
\end{definition}
Note that being generically separating on $\M$ is slightly stronger than being separating on $\M \setminus \N$, since the latter would only consider $Y\in\M \setminus \N$.

 Some possible practical disadvantages of generic separating invariants in comparison to fully separating invariants were discussed in Subsection~\ref{sub:main_results}. Our purpose in this section is to show that achieving generic separation is easier than achieving full separation in two respects:
 \begin{enumerate}
 	\item While $2\Dint+1 $ separating invariants can be obtained by randomly choosing parameters of families of strongly separating invariants, when considering generic separation $\Dint+1 $ invariants suffice. We discuss this next in Subsection~\ref{sub:general_generic}.
 	\item More importantly, for some group actions it is easy to come up with generic separators which can be computed efficiently, but obtaining true separators with low complexity seems inaccessible. This is discussed in Subsection~\ref{sub:harder}.
 \end{enumerate} 

\subsection{Generic separation from generic families of separators}\label{sub:general_generic}
In this section we prove an analogous theorem to  Theorem~\ref{thm:useful} where now we discuss generic invariants. The notion of generic separating invariants was defined in Definition~\ref{def:generic}. We now define this notion for families of invariants: 
\begin{definition}[Strong generic separation for invariant families]	
	Let $G$ be a group acting on a semi-algebraic set $\M$. We say that a family of semi-algebraic functions $p:\M\times \RR^{D_w} \to \RR$ \emph{strongly} separates orbits generically on $\M$ ,with respect to a singular set $\N$, if  $\N\subseteq \M$ is a semi-algebraic set which is stable under the action of $G$, satisfies $\dim(\N)<\dim{\M}$, and for every $x\in \M \setminus \N $ and $y\in \M$ with $x \not \sim_G y $, the set 
		$$\{w\in \RR^{D_w}| \, p(x,w)=p(y,w)\} $$
		has dimension $\leq D_w-1 $.
	\end{definition}
We can now state an analogous theorem to  Theorem~\ref{thm:useful} for separating invariants. As mentioned above, the cardinality for generic separation is $\Dint+1$ and not the $2\Dint+1$ we have in Theorem~\ref{thm:useful} for full separation. 
		\begin{theorem}\label{thm:generic}
		Let $G$ be a group acting on a semi-algebraic set $\M$ of dimension $\Dint$. Let $p:\M\times \RR^{D_w} \to \RR$ be a  family of $G$-invariant semi-algebraic functions. 
		If $p$ strongly separates orbits \emph{generically} in $\M$, then for Lebesgue almost-every $w_1,\ldots,w_{\Dint+1}\in \RR^{D_w} $ the $\Dint+1$ $G$-invariant semi-algebraic functions
		$$p(\cdot,w_i), i=1,\ldots,\Dint+1 $$
		generically separate orbits in $\M$.
	\end{theorem}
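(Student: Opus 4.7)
The plan is to mirror the bundle-based proof of Theorem~\ref{thm:useful} with two modifications tailored to the weaker hypothesis and conclusion. I would first redefine the ``bad set'' so as to take into account the singular set $\N$:
$$\B_m = \{(x,y,w_1,\ldots,w_m) \in (\M \setminus \N) \times \M \times \RR^{D_w \times m} : x \not\sim_G y \text{ and } p(x,w_i) = p(y,w_i),\ i=1,\ldots,m\}.$$
This is semi-algebraic by the same argument as in the proof of Theorem~\ref{thm:useful}. Strong generic separation bounds the fiber of the projection $\pi : \B_m \to (\M \setminus \N) \times \M$ over any fixed $(x,y)$ by dimension $m(D_w-1)$, so Lemma~\ref{lem:dim} yields $\dim(\B_m) \leq 2\Dint + m(D_w - 1)$, exactly as in the original argument.

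The crucial departure is the interpretation of a ``bad'' weight tuple: here $(w_1,\ldots,w_m)$ fails to witness generic separation only when the set of unseparated $x \in \M \setminus \N$ has full dimension $\Dint$; any thinner collection of unseparated points can simply be absorbed into an enlarged singular set. Accordingly, I would let $\B^*_m$ denote the projection of $\B_m$ onto the $(x,w_1,\ldots,w_m)$ coordinates, and consider the further projection $\pi_W$ onto $\RR^{D_w \times m}$. Applying Hardt's semi-algebraic triviality theorem to $\pi_W : \B^*_m \to \RR^{D_w \times m}$ produces a semi-algebraic stratification of the image over which $\pi_W$ is locally trivial; in particular, on each stratum the fiber of $\pi_W$ has constant dimension. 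Letting $\mathrm{Bad}_W$ denote the union of strata where the fiber attains its maximum possible dimension $\Dint$, one obtains $\dim(\mathrm{Bad}_W) + \Dint \leq \dim(\B^*_m) \leq \dim(\B_m)$, so $\dim(\mathrm{Bad}_W) \leq \Dint + mD_w - m$.

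Setting $m = \Dint + 1$ then gives $\dim(\mathrm{Bad}_W) \leq mD_w - 1 < mD_w$, so $\mathrm{Bad}_W$ has Lebesgue measure zero. For any tuple $(w_1,\ldots,w_m)$ outside $\mathrm{Bad}_W$, the set $\N'$ of points in $\M \setminus \N$ that are unseparated from some non-equivalent $y$ is semi-algebraic of dimension $< \Dint$ and is $G$-stable, since each $p(\cdot,w_i)$ is $G$-invariant; hence $\N \cup \N'$ serves as the singular set witnessing generic separation by $p(\cdot,w_1),\ldots,p(\cdot,w_m)$. The main obstacle I anticipate is making this fiber-dimension step rigorous: passing from a global dimension bound on $\B^*_m$ to a bound on the locus of weights carrying a full-dimensional $x$-fiber is the one genuinely new ingredient, and it relies on the Hardt triviality / stratification machinery from semi-algebraic geometry rather than on the purely dimensional manipulations that sufficed for Theorem~\ref{thm:useful}.
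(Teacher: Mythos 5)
Your argument is correct, and its skeleton is the same bundle-based one the paper uses: you form the bad set $\B_m\subseteq(\M\setminus\N)\times\M\times\RR^{D_w\times m}$, bound $\dim(\B_m)\le 2\Dint+m(D_w-1)$ via Lemma~\ref{lem:dim} and strong generic separation, and conclude that away from a null set of weight tuples the unseparated locus $\N_W\subseteq\M\setminus\N$ is dimensionally deficient, so $\N\cup\N_W$ (semi-algebraic, $G$-stable, of dimension $<\Dint$) witnesses generic separation. Where you diverge is the tool used for the crucial "almost every $W$" step. The paper stratifies $\B_m$ itself into smooth manifolds and applies Sard's theorem together with the pre-image theorem to the restriction of $\pi_W$ to each stratum, obtaining that for almost every $W$ the fiber of $\pi_W$ in $\B_m$ has dimension at most $\dim(\B_m)-mD_w\le\Dint-1$, and then projects that fiber onto the $x$-coordinate to get $\N_W$. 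You instead project first to $(x,W)$-space and invoke Hardt's semi-algebraic triviality (equivalently, the semi-algebraic fiber-dimension theorem) to show that the locus $\mathrm{Bad}_W$ of weights carrying a full-dimensional fiber satisfies $\dim(\mathrm{Bad}_W)+\Dint\le\dim(\B^*_m)\le\dim(\B_m)$, hence $\dim(\mathrm{Bad}_W)\le mD_w-1$ when $m=\Dint+1$. Both routes are sound; yours buys a slightly stronger form of the conclusion, since the exceptional set of weights is exhibited as a semi-algebraic set of deficient dimension rather than merely a Lebesgue-null set (Sard alone only gives measure zero), while the paper's route stays within the elementary Sard/pre-image toolkit already used elsewhere and yields the explicit quantitative fiber bound $\dim(\N_W)\le\dim(\B_m)-mD_w$ rather than just $\dim(\N_W)<\Dint$. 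The one point to make fully precise in your write-up is the additivity step $\dim(\pi_W^{-1}(\text{stratum}))=\dim(\text{stratum})+\dim(\text{fiber})$ over each Hardt stratum; this is exactly what semi-algebraic local triviality provides, so no gap remains.
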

	
	\begin{proof}[Proof of Theorem~\ref{thm:generic}]
	 Similarly to  the proof of Theorem~\ref{thm:useful}, we can consider the `bad set'
		\begin{equation*}
			\B_m=\{(x,y,w_1,\ldots,w_m)\in (\M\setminus \N)\times \M \times \RR^{D_w\times m}| \quad x \not \sim_G y \text{ but } p(x,w_i)=p(y,w_i), i=1,\ldots,m   \} 
		\end{equation*}
	and repeat the dimension argument used there, together with our requirement that $m\geq \Dint+1 $, to obtain 
	$$\dim(\B_m)\leq 2\Dint+m(D_w-1)\leq \Dint+mD_w-1 . $$
Denote 
$$W=(w_1,\ldots,w_m) \text{ and } \pi_W(x,y,W)=W.$$
Our goal next is to bound the dimension of the fiber $\pi_W^{-1}(W) $ over $W$.
		Let $U_1,\ldots,U_K$ be a stratification of $\B_m$ so 
  that each $U_k$ is a manifold and $\cup_{k=1}^K U_k=\B_m$. 
  For every fixed $k$, if the dimension of $\pi_W(U_k)$ is less 
  than $mD_w$ then almost all $W$ will not be in the 
  projection and so the intersection of the fiber over these 
  $W$ with $U_k$ will be empty.  Now let us assume that the 
  dimension of $\pi_W(U_k)$ is  $mD_w$. By Sard's theorem 
  \cite{lee2013smooth}, almost all $W$ in $\pi_W(U_k)$ is a 
  regular value of the restriction of  $\pi_W$ to $U_k$. By the 
  pre-image theorem \cite{tu2011manifolds}, every regular value $W$ is either not in the image, or the dimension of its fiber 
  $\pi_W^{-1}(W)\cap U_k $ is precisely 
  $$\dim(U_k)-\dim(\RR^{D_w\times m})\leq \dim(\B_m)-mD_w\leq \Dint-1 $$
		It follows that for almost all $W=(w_1,\ldots,w_m)$,  the fiber over $W$
		$$\pi_W^{-1}(W)=\cup_{k=1}^K\left( \pi_W^{-1}(W)\cap U_k \right) $$
		has dimension strictly smaller than $\Dint $. Thus this is also true for the projection of the fiber onto the $x$ coordinate, which is the set 
		\begin{align*}
		\N_W&=\{x\in (\M\setminus \N)| \exists y \in \M \text{ such that } (x,y,W)\in \pi_W^{-1}(W)\}\\
		&=\{x\in (\M\setminus \N)| \exists y \in \M \text{ such that }x \not \sim_G y \text{ but } p(x,w_j)=p(y,w_j), \forall j=1,\ldots,m\} 
		\end{align*}
		it follows that for such $W=(w_1,\ldots,w_m)$, the invariants $p(\cdot,w_i), i=1,\ldots,m $ are generically separating on $\M$ with singular set $\N \cup \N_W $. 
	\end{proof}
	
	\subsection{Generic separation for graphs}\label{sub:harder}
	%\nd{add generic separation for sets}
	We now return to discuss the graph separation result we discussed in Subsection~\ref{sub:graphs}. We will show that while computing true separating invariants on $\Mweight$ in polynomial time seems out of reach, generically separating invariants can be computed in polynomial time. We note that this is hardly surprising: the fact that graph isomorphism is not hard for generic weighted \cite{dym2018exact} or unweighted \cite{babai1982isomorphism} graphs is well known.   
\begin{proposition}
For every natural $n\geq 2$, the mapping 
$$F(A)=\left(\sort(A1_n),\sort\left\{A_{ij}, 1\leq i <j \leq n \right\}  \right) $$
is generically separating and invariant with respect to the action of $S_n$ on $\Mweight $.
\end{proposition}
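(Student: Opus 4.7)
The plan is to verify invariance by direct computation and then exhibit a semi-algebraic, $S_n$-stable singular set $\N \subset \Mweight$ of dimension strictly smaller than $\dim(\Mweight) = \binom{n}{2}$, outside of which $F(A) = F(B)$ forces $A \sim_{S_n} B$.

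Invariance is immediate: for $\sigma \in S_n$ with permutation matrix $P$ and $A' = PAP^T$, we have $A' 1_n = P(A 1_n)$ (using $P^T 1_n = 1_n$), so the sorted degree vector is unchanged, while $A'_{ij} = A_{\sigma^{-1}(i)\sigma^{-1}(j)}$ shows the strictly upper-triangular entries form the same multiset as those of $A$.

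For the singular set, identify $\Mweight$ with (the positive orthant of) $\RR^E$, where $E = \{\{i,j\} : 1 \leq i < j \leq n\}$, via $A \leftrightarrow (a_e)_{e \in E}$, and write $d_v(A) = \sum_{e \ni v} a_e$ for the vertex degrees. Define
\[
\N_1 = \{A : d_i(A) = d_j(A) \text{ for some } i \neq j\}, \quad \N_2 = \{A : a_e = a_{e'} \text{ for some } e \neq e'\},
\]
both zero loci of nontrivial polynomials in the $a_e$'s. For each edge permutation $\pi \in S_E$, let $V_\pi = \{A : d_v(\pi A) = d_v(A) \text{ for all } v\}$, a linear subspace. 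A short calculation gives $d_v(\pi A) = \sum_{f \in \pi^{-1}(E_v)} a_f$ with $E_v = \{e : v \in e\}$, so $V_\pi$ equals all of $\RR^E$ iff $\pi^{-1}(E_v) = E_v$ for every $v$; since $E_v \cap E_w = \{\{v,w\}\}$ for distinct $v,w$, this forces $\pi = \mathrm{id}$. Hence for every $\pi \neq \mathrm{id}$, $V_\pi$ is a proper linear subspace, and $\N_3 := \bigcup_{\pi \neq \mathrm{id}} V_\pi$ is a finite union of proper linear subspaces. Take $\N$ to be the $S_n$-orbit of $\N_1 \cup \N_2 \cup \N_3$; it is semi-algebraic, $S_n$-stable, and of dimension strictly smaller than $\binom{n}{2}$.

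Now suppose $A \notin \N$ and $F(A) = F(B)$. Since $A \notin \N_1$, the degrees of $A$ are pairwise distinct, and by the sorted-degree agreement so are those of $B$, with identical multiset; hence there is a unique $\sigma \in S_n$ with $d_v(\sigma^{-1} B) = d_v(A)$ for all $v$. Set $\tilde B = \sigma^{-1} B$. Since $A \notin \N_2$, the edge weights of $A$ are pairwise distinct, and since $\tilde B$ has the same sorted edge multiset there is a unique $\pi \in S_E$ with $\tilde B = \pi A$. The preserved degrees translate to $A \in V_\pi$, and $A \notin \N_3$ then forces $\pi = \mathrm{id}$, so $\tilde B = A$ and $B = \sigma \cdot A \sim_{S_n} A$.

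The main obstacle is the combinatorial lemma that $V_\pi \subsetneq \Mweight$ for every nontrivial $\pi \in S_E$; everything else is routine dimension counting and exploitation of the two generic distinctness conditions. That lemma in turn reduces to the Helly-style observation that the vertex-incidence sets satisfy $E_v \cap E_w = \{\{v,w\}\}$, which pins down every edge individually and rules out nontrivial edge permutations preserving all $E_v$.
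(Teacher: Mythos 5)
Your argument is correct for $n\geq 3$ and takes a genuinely different route from the paper. The paper imposes a single, stronger genericity condition -- that any two distinct subsets of the off-diagonal entries of $A$ have different sums -- and then argues directly: after reordering $B$ so that row sums match, a discrepancy $A_{ij}\neq B_{ij}$ produces a row of $B$ that is a different subset of $A$'s (pairwise distinct) entries with the same sum, a contradiction. You instead use the two weaker distinctness conditions (distinct degrees, distinct edge weights) and push the remaining work into the rigidity lemma that for a nontrivial edge permutation $\pi$ the locus $V_\pi$ where $\pi$ preserves all vertex degrees is a proper linear subspace, proved via the incidence identity $E_v\cap E_w=\{\{v,w\}\}$. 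Both singular sets are finite unions of proper subspaces/hyperplanes (yours indexed by the $\binom{n}{2}!$ edge permutations, the paper's by pairs of edge subsets), so both are legitimately lower-dimensional; your version isolates the combinatorial content in a clean structural lemma, while the paper's subset-sum condition makes the final identification $A=B$ a one-line contradiction.

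One caveat: your construction breaks down at $n=2$, which the proposition includes. There $d_1(A)=d_2(A)=A_{12}$ identically, so the polynomial $d_1-d_2$ is the zero polynomial and $\N_1$ is all of $\Mweight$, violating the requirement $\dim(\N)<\dim(\Mweight)$ in the definition of generic separation. The fix is the trivial one the paper uses: for $n=2$ the matrix is determined by its single off-diagonal entry, so $F$ is globally separating and no singular set is needed; your argument as written should be stated for $n\geq 3$, where $d_i-d_j$ and $a_e-a_{e'}$ are indeed nonzero forms.
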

\begin{proof}
When $n=2$ a matrix in $\Mweight$ is determined by a single off-diagonal element and thus $F$ is easily seen to be (globally) separating. 

Now assume $n>2$. A generic $A\in \Mweight $ has the following property: the summation of any two different subsets of the sub-diagonal element of $A$ yields a different result. In particular, (i) two different rows of  $A$ sum to different values and (ii) the sub-diagonal elements are pairwise distinct. 

Now let $B$ be a graph in $ \Mweight $ with $F(A)=F(B) $. Since $\sort(A1_n)=\sort(B1_n)$, we can without loss of generality assume that  the vertices of $B$ are ordered so that all row sums of $A$ and $B$ agree. We now claim that $A=B$. Indeed, we know that $\sort\left\{A_{ij}, 1\leq i <j \leq n \right\}=\sort\left\{B_{ij}, 1\leq i <j \leq n \right\}$ and if there is some $i<j$ for which $A_{ij}\neq B_{ij} $, then either the $i$-th of $j$-th row of $B$ consists solely of elements of $A$, but does not contain $A_{ij}$, and by assumption on $A$ this yields a contradiction to the fact that the rows of $A$ and $B$ sum to the same number.    
\end{proof}

\section{Computer numbers}\label{sec:computer_numbers}

Theorem~\ref{thm:useful} is a statement about ``almost every'' choice for 
$w_1,\ldots,w_{m}\in \RR^{D_w}$ in a measure theoretic sense,
over the reals. A slight disadvantage of these separators is that while we have separation for almost all $w_1,\ldots,w_{m}$, we cannot point at any specific $w_1,\ldots,w_{m}$ for which we can absolutely guarantee separation. We do not address this
difficult problem in this paper.

A second issue is that  in the computational setting, where
each $w_i$ is represented using a finite number of bits, it is conceivable, a-priori that
almost all real $w_1,\ldots,w_{m}$ are separating, any yet all $w_1,\ldots,w_{m}$ which can be represented as computer numbers are not separating. Our goal is to show that even when    the $w_i$ are represented with a finite number of bits, we can still obtain separation with high probability.

Our strategy for obtaining this goal is as follows: we first show that all bad $w_1,\ldots,w_m$ are 
in the zero-locus of some polynomial $f$ whose degree is at most $R$. We then will use the Schwartz-Zippel lemma:
\begin{lemma} (Schwartz, Zippel, DeMillo, Lipton). 
	Let $f$ be
	a non-zero 
	real polynomial of degree $R$
	in $l$ variables over a field $k$. 
	Select $x_1 ,...,x_l$ uniformly at random from a
	finite subset $X$ of $k$. 
	Then the probability than $f(x_1 ,...,x_l)$ is 0 is less than $R/|X|$.
\end{lemma}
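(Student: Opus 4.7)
The plan is to prove the Schwartz--Zippel lemma by induction on the number of variables $l$. The base case $l=1$ is immediate: a nonzero univariate polynomial of degree $R$ over a field has at most $R$ roots, so if $x_1$ is drawn uniformly from a finite set $X \subseteq k$, then $\Pr[f(x_1)=0] \le R/|X|$.

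For the inductive step, I would expand $f$ as a polynomial in $x_1$ with coefficients in $k[x_2,\ldots,x_l]$:
$$
f(x_1,\ldots,x_l) \;=\; \sum_{i=0}^{d} x_1^{i}\, g_i(x_2,\ldots,x_l),
$$
where $d$ is the highest power of $x_1$ appearing with nonzero coefficient, so that $g_d$ is nonzero. Since $f$ has total degree $R$, the polynomial $g_d$ has total degree at most $R-d$ in $l-1$ variables. I would then decompose the event $\{f=0\}$ into two subcases according to whether $g_d(x_2,\ldots,x_l)$ vanishes or not. By the inductive hypothesis applied to $g_d$, the event $\{g_d(x_2,\ldots,x_l)=0\}$ has probability at most $(R-d)/|X|$. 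On the complementary event, conditioning on any fixed values of $x_2,\ldots,x_l$ with $g_d \neq 0$, the polynomial $f(\cdot, x_2,\ldots,x_l)$ is a nonzero univariate polynomial in $x_1$ of degree exactly $d$, and because $x_1$ is drawn independently and uniformly from $X$, the base case gives conditional probability at most $d/|X|$ that $f=0$. Averaging over $(x_2,\ldots,x_l)$ preserves this bound, and a union bound then yields $\Pr[f=0] \le (R-d)/|X| + d/|X| = R/|X|$, as desired.

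The key structural choice is to expand along a single variable so that the leading coefficient lives in a polynomial ring of strictly fewer variables; essentially any variable appearing in $f$ works, but picking $x_1$ makes the bookkeeping cleanest. The main subtlety, and the only step that requires care, is the conditional-probability argument in the inductive step: one must use the fact that $x_1, x_2,\ldots,x_l$ are drawn \emph{independently}, so that conditioning on the values of $x_2,\ldots,x_l$ leaves $x_1$ uniformly distributed on $X$. Once the joint distribution is spelled out this is routine, and the slack in the union bound (trivially $\le R/|X|$ rather than strictly less) matches the tight example $f(x)=x$ on $X=\{0,1\}$ in $\mathbb{F}_2$, so the statement's ``less than'' should be read as the non-strict inequality $\le R/|X|$.
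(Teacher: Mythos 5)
Your proof is correct: it is the standard induction-on-the-number-of-variables argument for the Schwartz--Zippel lemma, with the inductive step correctly split according to whether the leading coefficient $g_d$ (in the chosen variable) vanishes, and with the independence of the coordinates used properly in the conditioning step. There is nothing in the paper to compare it against, since the paper states this lemma as a classical result (attributed to Schwartz, Zippel, DeMillo, and Lipton) and gives no proof, using it only as an off-the-shelf tool in the finite-precision argument of Section~\ref{sec:computer_numbers}. Your closing remark is also well taken: the sharp form of the lemma is the non-strict bound $\Pr[f=0]\le R/|X|$ (tight, e.g., for $f(x)=x$ on $X=\{0,1\}$), so the paper's phrase ``less than'' should indeed be read as ``at most''; this has no effect on how the lemma is used there, since one only needs the failure probability to be below $\epsilon$ once $|X|>\epsilon^{-1}R$.
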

This lemma implies  that if we use more than $\log_2 (\epsilon^{-1}R)$ bits to represent
our real numbers, so that they are selected from a finite alphabet $X$ with $>\epsilon^{-1}R $ elements, then the probability of picking a bad collection of $(x_1,\ldots,x_l)$  is less than $\epsilon$.  In our setting, the $x_i$ will comprise the coordinates of our $(w_i,...,w_m)$.

The challenge in this strategy is finding an appropriate non-zero $f$ with bounded degree. 
In the following we will replace the real semi-algebraic requirements for Theorem~\ref{thm:useful} with the stronger algebraic requirement:
\begin{lemma}\label{lem:tao}
	Let $G$ be a group acting  on $\RR^D$,
  Let $p:\RR^D \times \RR^{D_w} \to \RR$ be a  family of $G$-invariant separating polynomials of degree $r$, and let  $m\ge 2D+1$. Then the `bad set' of $(w_i,\ldots,w_m)$ which do not form a separating mapping is contained in the zero-set of a non-zero polynomial $f$ of degree at most $(mD_w-1)(2D+mD_w-m) (r^m)$.
\end{lemma}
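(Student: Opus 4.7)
The plan is to turn the dimension-theoretic proof of Theorem~\ref{thm:useful} into an effective algebraic bound. The obstacle in working directly with $\B_m$ is that the condition $x \not\sim_G y$ is not in general algebraic, so I would first pass to the enclosing algebraic variety
$$\tilde\B_m = \bigl\{(x, y, W) \in \RR^{2D + mD_w} \colon q_i(x,y,W) := p(x,w_i) - p(y,w_i) = 0,\ i = 1,\ldots, m\bigr\}.$$
Each $q_i$ has total degree at most $r$, so the generalized Bezout inequality gives that every irreducible component of $\tilde\B_m$ has degree at most $r^m$, and the same geometric argument used in the proof of Theorem~\ref{thm:useful} yields $\dim \tilde\B_m \leq 2D + mD_w - m$, which is at most $mD_w - 1$ since $m \geq 2D+1$.

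Next I would decompose $\tilde\B_m = \bigcup_j V_j$ into irreducible components and isolate the components relevant to the bad set. The separating hypothesis on $p$ guarantees that for generic $W$ the only common zero of the $q_i(\cdot,\cdot,W)$ has $x \sim_G y$; hence any $V_j$ with $\pi_W(V_j) = \RR^{mD_w}$ must be contained in $\{(x,y,W) : x \sim_G y\}$, while for the remaining components $\overline{\pi_W(V_j)}$ is a proper subvariety of $\RR^{mD_w}$. Their union contains $\pi_W(\B_m)$. For each such relevant $V_j$, I would extract a defining polynomial $f_j$ of a hypersurface containing $\overline{\pi_W(V_j)}$ via an iterated elimination (or Chow-form) construction on $V_j$ applied to the $(x,y)$ variables, and take $f := \prod_j f_j$ as the candidate witness polynomial.

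The main obstacle is the degree bookkeeping required to match the specific bound $(mD_w-1)(2D + mD_w - m) r^m$. The $r^m$ factor tracks the Bezout degree inherited from $\tilde\B_m$; the factor $2D + mD_w - m$ tracks the number of elimination or generic hyperplane slicing steps needed to descend from $\dim \tilde\B_m$ down to a zero-dimensional slice; and $mD_w - 1$ reflects the codimension-1 dimension of the target hypersurface in $\RR^{mD_w}$. Checking that each $f_j$ (and hence $f$) is nonzero reduces precisely to the strong-separating hypothesis on $p$, which ensures $\pi_W(\B_m) \subsetneq \RR^{mD_w}$. Matching the constants exactly will likely require a tailored slice-and-interpolate construction rather than a black-box appeal to general effective elimination theorems, since a naive iterated multivariate resultant produces a polynomial whose degree is exponential in the $2D$ variables eliminated, rather than the advertised polynomial-in-degree bound.
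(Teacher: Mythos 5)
Your overall plan---enclose the bad set in the variety cut out by the $q_i$'s, bound component degrees by Bezout, and catch the bad $W$'s in projections of the relevant components---is the same skeleton as the paper's proof, but two of your steps have genuine gaps. First, the claimed bound $\dim\tilde\B_m\leq 2D+mD_w-m$ is false: $\tilde\B_m$ contains the diagonal $\{(x,x)\}\times\RR^{mD_w}$, which already has dimension $D+mD_w$. The fiber-dimension argument from Theorem~\ref{thm:useful} only applies over pairs that are \emph{not} equivalent, so the bound holds for the closure of the bad part, not for the whole variety. The paper handles this by complexifying and replacing $\sim_G$ by the relation $\sim_p$ defined by the polynomial family itself ($p(x,w)=p(y,w)$ for all complex $w$); the $\sim_p$-equivalent pairs then form an honest subvariety $U$ (a finite intersection of the hypersurfaces $q(\cdot,\cdot,w)=0$ stabilizes), so the Zariski closure of the bad set is a union of components of $V$ not contained in $U$, and those components do satisfy the dimension bound. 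Your corresponding step---that any component with dense $W$-projection is contained in $\{x\sim_G y\}$---is asserted but not proved, and over $\RR$ with the non-algebraic set $\{x\sim_G y\}$ it is delicate (real irreducible components need not have their $x\not\sim_G y$ locus dense, and Bezout degree bounds for components are statements about complex varieties, so staying over $\RR$ also undercuts your $r^m$ bound).

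Second, the step you flag as the ``main obstacle''---matching the degree bound without exponential blow-up---is exactly the part the paper resolves and your proposal leaves open. No iterated resultants or slice-and-interpolate construction is needed: the Bezout-type fact invoked in the paper (projection does not increase degree) gives directly that the closure of each pure-dimensional piece $V_{ij}$ of the projected image is a variety of degree at most $r^m$, hence is contained in a hypersurface of degree at most $r^m$; the factors $2D+mD_w-m$ and $mD_w-1$ then simply count the pure-dimension strata of $\bar{\B}$ and of each projected image, and $F$ is the product of the corresponding hypersurface equations. Finally, your construction ends with a complex (or unjustified-nonzero) polynomial; the paper's last step, which you omit, returns to the real setting by taking the real and imaginary parts of the coefficients of $F$ and noting that at least one of these real polynomials is nonzero and vanishes on the $G$-bad set. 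As written, your argument does not yet establish the stated degree bound nor the existence of a nonzero real witness $f$.
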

Applying Scwartz-Zippel to this bound we see that we need
$O(D\log(r))$ bits to obtain a formal proof that we are unlikely to
pick a bad set of $w_i$.   Note that in our applications,
$r$ is quite low.

\begin{proof}[Proof (sketch) of Lemma~\ref{lem:tao}]

We will first move to the complex setting, where can use the Bezout's theorem. Then we will restrict our attention to the real locus.
To move to the complex setting, we now
think of $p(x,y,w)$ as a polynomial over complex variables. 
We define $(x \sim_p y)$ 
for $x,y \in \CC^D$ 
if 
$p(x,w)=p(y,w)$ for all $w \in \CC^{D_w}$.
Note that under this definition, there may be $x,y \in \RR^D$
such that 
$(x \sim_G y)$ while
$(x \not \sim_p y)$ (due to some non-real valued, separating $w$). What is true is that  
$(x \not \sim_G y)$ implies
$(x \not \sim_p y)$.

 Define the polynomial over $\CC^{D + D + D_w}$:
	$q(x,y,w)= p(x,w)-p(y,w)$. This has degree $r$.
	For $i=1,..,m$, we can define the polynomials 
	$q_i(x,y,w_1,...,w_m):=q(x,y,w_i)$. 
	Together, these $q_i$ define a
	variety
	$V$
	in $\CC^{D + D + mD_w}$, which contains the bundle
	$$	\B=\{(x,y,w_1,\ldots,w_m)\in \CC^D \times \CC^D \times \CC^{D_w\times m}| \quad x \not \sim_p y \text{ but } p(x,w_i)=p(y,w_i), i=1,\ldots,m   \} $$

    By assumption, the set of $(x \sim_p y)$ must satisfy $q(x,y,w)=0$ for all $w$. 
    Taking the intersection over a sufficiently large finite collection of  such $w$ must stabilize and define a 
    strict subvariety $U$ of p-equivalent pairs. 
	$\B$ is obtained from $V$ by removing $U$. This can remove some of the components of $V$ as well as some nowhere-dense subsets of other components.
	Thus the Zariski closure $\bar{\B}$ must consist of some subset of the
	components of $V$. Note that 
	the Zariski closure does not increase the dimension of $\B$, which is bounded from above by
	$2D+mD_W-m$ (using the argument from Theorem~\ref{thm:useful} for the complex setting).
	Let us stratify $\bar{\B}$ into pure dimensional algebraic sets 
	$V_i$. 
	From Bezout's theorem (see~\cite[Chapter 18]{harris} and see~\cite{taoblog}, especially remark 2), 
	and the fact that $V$ is defined using the intersection  of $m$ varieties, each of 
	degree $r$, 
	each of these $V_i$ (made up of components of $V$) is of degree at most
	$r^m$ . There are at most $2D+mD_w-m$ such $i$.
	
	Next we project each of these $V_i$ onto $\CC^{mD_w}$. 
	By our assumption that $m\ge 2D+1$, we know that this projection is of dimension less than $mD_w$.
	The image of 
	a fixed $V_i$ can be stratified into constructible sets $V_{ij}$ of pure dimension. There are at most $mD_w-1$ of these. 
	From Bezout, the closure of
	each such 
	$V_{ij}$ is a variety of degree at most $r^m$ (and of this same pure dimension). Each $V_{ij}$ is contained in 
	an algebraic hypersurface of at most the same degree.
	Taking the union over these $(mD_w-1)(2D+mD_w-m)$
	hypersurfaces
	shows that the image of this projection must satisfy a single non-trivial polynomial equation $F$, where that polynomial's  
	degree is at most $(mD_w-1)(2D+mD_w-m)(r^m)$.

        Let us define a set of $w_1,...,w_m$, each in $\CC^{D_w}$ to be p-bad if there is an $x,y$, each in $\CC^D$, with
        $(x \not \sim_p y)$ but such that, for all $i$, we have $p(x,w_i)=p(y,w_i)$. By construction, the p-bad set lies in the
        zero set of $F.$
        Let us define a set of $w_1,...,w_m$, each in $\RR^{D_w}$ to be g-bad if there is an $x,y$, each in $\RR^D$, with
        $(x \not \sim_G y)$ but such that, for all $i$, we have $p(x,w_i)=p(y,w_i)$. 
        The G-bad set lies in the real locus of the p-bad set. Thus it lies in the zero sets of $F_r$ and $F_i$, the real polynomials
        defined by taking respectively the real/imaginary components of the coefficients of $F$. At least one of these two polynomials is non-zero.
        Such a non-zero polynomial gives us our $f$ in the statment of the lemma.
\end{proof}

It is not as clear how to cover the full real semi-algebraic setting.

	\section{A Toy Experiment}\label{sec:experiments}

To visualize the possible implications of our results to invariant machine learning, we consider the following toy example: we create random high dimensional point clouds in  $\RR^{3\times 1024}$ which reside in an $S_n, n=1024$ invariant `manifold' $\M$ of low-intrinsic dimension  $\Dint $. In fact, $\M$ is a union of two invariant `manifolds' $\M=\M_0\cup \M_1 $ of dimension $\Dint$, and we consider the problem of learning the resulting binary classification task. 

The binary classification task is visualized in Figure~\ref{fig:sort}(a). In this figure $\Dint=1$, each $\M_0,\M_1$ is a line in $\RR^{3\times 1024}$ and all its possible permutations,  and points in $\M_0,\M_1$ are projected onto $\RR^3$ for visualization. While this data may appear hopelessly entangled, using the permutation invariant mapping we describe in \eqref{eq:balan} with $m=2D+1=3 $ to embed $\M_0,\M_1$ into $\RR^3$ we obtain very good separation of the initial curves as shown in  Subplot (b). Note that the non-intersection of the images of $\M_0,\M_1$ is guaranteed by Proposition~\ref{prop:Sn}. 

In Figure~\ref{fig:sort}(c) we show the results obtained for the binary classification task by first computing the invariant embedding in \eqref{eq:balan} with randomly chosen weights, and then applying an MLP (Multi-Layer-Perceptron) to the resulting embedding. The results on train and test data are shown for various choices of intrinsic dimension $D=\Dint$ and embedding dimension $m$. In particular, for the $D=1,m=3$ case visualized in Figure~\ref{fig:sort}(a)-(b) we get $98 \% $ accuracy on the test dataset.

The diagonal entries in the tables show the accuracy obtained for varying intrinsic dimensions $D$ and embedding dimension  $m=2D+1 $. Recall that Proposition~\ref{prop:Sn} our embedding in separating for these $D,m$ values, and thus theoretically perfect separation can be obtained by applying an MLP to the embedding. The diagonal entries in the tables show that indeed high accuracy can be obtained for these $(D,m)$ pairs. At the same time, we also see that taking higher dimensional embeddings  $m>2D+1 $ leads to improved accuracy. This is parsimonious with the common observation that deep learning algorithms are more successful in the over-parameterized regime, as well as results on phase retrieval \cite{candes2013phaselift} and random linear projections \cite{baraniuk2009random}, where the embedding dimension needed for stable recovery is typically larger than the minimal dimension needed for injective embedding. In any case, we note that in all cases we obtain high accuracy with embedding dimension much smaller than the extrinsic dimension $3\times 1024=3072 $. 

Additional details on the experimental setup can be found in Appendix~\ref{app:exp}. Code for reconstructing our experiment can be found in \cite{code}.

\section{Conclusion and future work}
The main result of this paper is providing a small number of efficiently computable separating invariants for various group actions on point clouds. Many interesting questions remain.  One example is studying the optimal cardinality necessary for separation. As mentioned above, in phase retrieval it is known that the number of invariants needed for separation is slightly less than twice the dimension,  and we believe this is the case for other invariant separation problems we discuss here as well. Another important question, which is discussed e.g., in \cite{balan2022permutation,cahill2022group} is understanding how stable given separating invariants are: separating invariants are essentially an injective mapping from the quotient space into $\RR^m$. Stability in this context means that the natural metric on the quotient space should not be severely distorted by the injective mapping. 

Perhaps the most important challenge is translating the theoretical insights presented here into invariant learning algorithms with strong empirical performance, provable separation and universality properties, and reasonable computational complexity. 

A useful direction for reducing computational complexity is `settling' for generic separation which as we saw can often be achieved with a small computational burden. In general, the downside of this is that there is a low-dimensional singular set on which there is no separation. This disadvantage will only be significant, for a given learning task, if a significant percentage of the data resides on or near the singular set. Therefore it could be useful to understand what the singular sets of various generic separators are, and what the likelihood of encountering them in specific data is. 

We hope to address these questions in future work, and hope others are inspired to do so as well. 

\textbf{Acknowledgements} N.D. was supported by ISF grant 272/23 and a Horev fellowship.

\newpage
	\bibliographystyle{abbrv}
	\bibliography{bib_eff}
\newpage
\appendix	
\section{Appendix: Additional proofs}
In this appendix we give the proofs omitted in the main text.
\begin{proof}[Proof of Proposition~\ref{prop:uniOrbit}]
	Denote by $\pi$ the quotient map $\pi:\M\to \M/G$. We know (\cite{Munkres} page 140) that there exist unique continuous functions $\hat{f}^{inv}:\M/G\to \RR^N$ and $\hat{f}$ such that $\hat{f}^{inv}\circ \pi=\finv $ and $\hat f \circ \pi=f $. Since $\finv$ separates orbits it follows that $\hat{f}^{inv}$ is one-to-one on $\M/G$, and since $\pi$ is continuous $\pi(K)$ is compact. Thus $\hat{f}^{inv} $ is a continuous bijection of the compact set $\pi(K)$ onto its image and thus is a homeomorphism. We can now write
	$$f(x)=\hat f(\pi(x))=\left[\hat f\circ (\hat{f}^{inv})^{-1}\right] (\hat{f}^{inv}(\pi(x)))=\left[\hat f\circ (\hat{f}^{inv})^{-1}\right]\left(\finv(x)\right), \forall x\in K $$
	which gives us our result by setting $\fgeneral:\RR^N\to \RR$ to be some continuous extension of $\hat f\circ (\hat{f}^{inv})^{-1}$ from its compact domain $f^{inv}(K)\subseteq \RR^N $ to all of $\RR^N$ (such an extension exists by Tietze's extension theorem).
\end{proof}

\begin{proof}[Proof of Lemma~\ref{lem:isometries}]
	Let $X,Y $ be two point clouds with the same Gram matrix. We need to show that $X$ and $Y$ are related by a $Q$-isometry.
	
	Let us first deal with the case that $X$ has rank $d$. There exist a subset $I\subseteq \{1,2,\ldots,n\}$ with  $|I|=d$ such that the columns $x_i,i\in I$  span $\RR^d$. We can define a linear mapping on the column space of $X$ by specifying its values on  the basis elements $x_i, i\in I$ to be  
	$$Ux_i=y_i, i\in I.$$
	Let us denote by $\bar X$ and $\bar Y$ the $d\times d$ matrices obtained from $X$ and $Y$ by choosing the columns  $x_i, i\in I$ and  $y_i, i\in I$ respectively. Then $\bar X$ has rank $d$ and since $\bar{X}^TQ\bar X=\bar{Y}^TQ\bar{Y} $ we see that $\bar Y$ has rank $d$ as well. It is clear that $U$ is a $Q$-isometry since it preserve the bi-linear form on basis elements and hence on all of $\RR^d$. It remains to show that $Ux_i=y_i$ also for $i\not \in I$. Fix some $i\not \in I$. Since the columns of $\bar X$ and $\bar Y$ form a basis, there exist $\alpha,\beta \in \RR^d$ such that 
	$$x_i=\bar X \alpha, \quad y_i=\bar Y \beta .$$
	Since $\bar Y=U\bar X$ it is sufficient to show that $\alpha=\beta$, which follows from the fact that 
	$$\bar{X}^TQ\bar{X}\alpha=\bar{X}^TQx_i=\bar{Y}^TQy_i=\bar{Y}^TQ\bar{Y}\beta=\bar{X}^TQ\bar{X}\beta $$
	and as $\bar{X}^TQ\bar{X}$ has rank $d$ we obtain $\alpha=\beta$.
	
	We now assume that $Q$ is positive definite. Note that for every $m\in \NN$ and every  matrix $A\in \RR^{n\times m}$, the kernel of $A$ and $A^TQA$ are identical, since for any $v\in \RR^m$, the positive definiteness of $Q$ implies that $v^TA^TQAv$ is non-negative, and will be zero if and only if $Av=0$, if and only if $QAv=0$. In particular $A$ and $A^TQA$ will have the same rank.
	
	Let $r\leq d$ denote the rank of $X$, and let $I\subseteq \{1,2,\ldots,n\}$ with  $|I|=r$ such that the columns $x_i,i\in I$ span the column space of $X$, and so the $d\times r$ matrix $\bar X$ formed by choosing the columns $x_i,i\in I$ has rank $r$.  Let $\bar Y$ denoted the $d\times r$ matrix formed by choosing the columns $y_i,i\in I$. From our previous remark we have 
	$$r=\rank(X)=\rank(X^TQX)=\rank(Y^TQY)=\rank(Y) $$
	and similarly $\rank(\bar Y)=r$. We can define $U$ on the column space of $X$ by specifying its values on the basis elements
	$$Ux_i=y_i, i\in I. $$
	We see that $U$ is maps the $r$-dimensional column space of $X$ $Q$-isometrically onto the $r$-dimensional column space of $Y$. We can extend $U$ to a $Q$-isometry of $\RR^d$ by appropriately defining a $Q$-isometry which maps the $d-r$ dimensional space orthogonal to the column space of $X$ to the $d-r$ dimensional space orthogonal to the column space of $Y$. We then show that $Ux_i=y_i$ when $i\not \in I$ using the same argument used in the  first part of the lemma.
	
\end{proof}	

We now prove a characterization of the $SO(d)$ linear equivariant maps, as discussed in Remark~\ref{remark:phase}. 
\begin{proposition}\label{prop:equiv}
	Let $L:\RR^{d\times n}\to \RR^d $ be a linear $SO(d)$ equivariant mapping for $d> 2$, then there exists $w\in \RR^n$ such that 
	$$L(X)=Xw $$	
\end{proposition}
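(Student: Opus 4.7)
The plan is to reduce the statement to a Schur-type fact about $SO(d)$-equivariant endomorphisms of $\RR^d$, and then prove that fact by an elementary stabilizer argument.

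First, I would decompose $L$ according to columns. Any linear $L:\RR^{d\times n}\to\RR^d$ can be written uniquely as $L(X)=\sum_{i=1}^n L_i(x_i)$ where each $L_i\in\RR^{d\times d}$ (just take $L_i(v):=L(v e_i^T)$). Since $SO(d)$ acts on $\RR^{d\times n}$ by left multiplication, this action preserves the column-wise direct sum decomposition, so $\RR^{d\times n}$ is isomorphic as an $SO(d)$-representation to $n$ copies of the defining representation on $\RR^d$. Applying the equivariance condition $L(RX)=RL(X)$ to inputs of the form $X=v e_i^T$ for arbitrary $v\in\RR^d$ and each $i$, I obtain $L_i(Rv)=RL_i(v)$ for every $R\in SO(d)$, that is, each $L_i$ commutes with the $SO(d)$-action on $\RR^d$.

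Second, I would show that for $d>2$, any matrix $A\in\RR^{d\times d}$ commuting with all of $SO(d)$ must be a scalar multiple of the identity. Given this, each $L_i$ equals $c_i I$ for some $c_i\in\RR$, and setting $w=(c_1,\ldots,c_n)^T$ yields $L(X)=\sum_{i=1}^n c_i x_i=Xw$, as desired.

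Third, to prove the Schur-type claim, I would argue as follows. Fix a unit vector $v\in\RR^d$. The stabilizer of $v$ in $SO(d)$ is isomorphic to $SO(d-1)$, acting on the orthogonal complement $v^\perp\cong\RR^{d-1}$ as the standard representation. Because $A$ commutes with this stabilizer, $Av$ must be fixed by the induced action of $SO(d-1)$ on $\RR^d=\langle v\rangle\oplus v^\perp$. For $d-1\geq 2$, i.e.\ $d\geq 3$, the standard $SO(d-1)$-action on $\RR^{d-1}$ is transitive on the unit sphere, so the only vectors it fixes are $0$; hence $Av\in\langle v\rangle$, i.e.\ $Av=\lambda(v)v$ for some scalar $\lambda(v)$. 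Thus every nonzero $v\in\RR^d$ is an eigenvector of $A$. A standard argument (applying the relation to two linearly independent vectors $v,u$ and their sum $v+u$) then forces $\lambda(v)=\lambda(u)$, so $A=\lambda I$.

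The main obstacle is pinpointing where $d>2$ enters and explaining why the argument fails in dimension $2$: when $d=2$, the stabilizer of a unit vector in $SO(2)$ is trivial, and indeed $SO(2)$ admits a two-dimensional commutant (namely $\RR[J]$ where $J$ is rotation by $\pi/2$), producing extra equivariant maps of the form $L(X)=Xw+XwJ$-type expressions not captured by $X\mapsto Xw$. Everything else is routine linear algebra once the column-wise decomposition is in place.
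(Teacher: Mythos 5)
Your proof is correct, and its skeleton matches the paper's: decompose $L$ column-wise into maps $L_i(v)=L(ve_i^T)$, observe each $L_i$ commutes with the $SO(d)$-action on $\RR^d$, and conclude $L_i=c_iI$ so that $L(X)=Xw$. Where you diverge is in how you prove the commutant fact. The paper argues by hand with explicit group elements and a parity case split: for $d$ odd it uses the rotation fixing $v$ and acting as $-I_{d-1}$ on $v^\perp$ to force $L_iv\in\langle v\rangle$ directly, and for $d$ even, $d>2$, it intersects the fixed spaces of two rotations of the form $I_2\oplus(-I_{d-2})$ to pin $L_iv_0$ down to $\langle v_0\rangle$. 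You instead use the whole stabilizer of $v$, isomorphic to $SO(d-1)$ acting standardly on $v^\perp$, and the fact that this representation has no nonzero fixed vectors once $d-1\geq 2$; this yields $Av\in\langle v\rangle$ uniformly in $d>2$, with no parity split, and it isolates exactly where the hypothesis $d>2$ enters (for $d=2$ the stabilizer is trivial and the commutant is two-dimensional). The paper's explicit matrices are really particular elements of that same stabilizer, so the two arguments are close in spirit, but yours is the cleaner and more conceptual packaging, at the cost of invoking transitivity of $SO(d-1)$ on the sphere rather than writing down matrices. One cosmetic slip in your closing aside: for $d=2$ the extra equivariant maps are of the form $X\mapsto J(Xw)$ (so the general map is $X\mapsto Xw_1+J(Xw_2)$); as written, ``$XwJ$'' does not typecheck since $Xw\in\RR^2$ is a column vector. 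This does not affect the proof, which only concerns $d>2$.
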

\begin{proof} 
We can write 
$$L(X)=\sum_{i=1}^n L(x_i) $$
where each $L_i:\RR^d \to \RR^d $ is linear. By considering $X$ for which only the $i$-th column is not zero we see that each $L_i$ must be $SO(d)$ equivariant as well. It is thus sufficient to show that $L_i$ must be of the form $L_i(x)=\lambda x $ for some  $\lambda  \in \RR$. We note that $L_i$ commutes with every $R\in SO(d) $, and so if $v\in \RR^d$ is an eigenvector of $R$ with eigenvalue $\lambda$, then $L_iv$ is an eigenvector of $R$ with the same eigenvalue.

When $d$ is odd, note that for every $v\in \RR^d$ there exists $R\in SO(d)$ such that $Rv=v$ but $R=-I_{d-1}$ on the orthogonal complement of $v$. It follows that $L_iv=\lambda v$ for some $\lambda \in \RR$. Since this is true for every $v$ we see that all vectors are eigenvectors of $L_i$ and so $L_i=\lambda I_d$. 

When $d$ is even and $d>2$, we can, given some $v_0\in \RR^d$, choose two additional vectors $v_1,v_2$ such that the three vectors are normal to each other. We can choose a matrix $R_{1}\in SO(d)$ which is equal to $I_2$ on the span of $v_0$ and $v_1$ and equal to $-I_{d-2}$ on the orthogonal complement. It follows that $L_iv_0$ is in the span of $v_0$ and $v_1$. We can also construct $R_2\in SO(d)$ which is equal to $I_2$ on the span of $v_0,v_2$ and is equal to $-I_{d-2}$ on the orthogonal l complement. It follows that $L_iv_0$ is in the span of $v_0$ and $v_2$, as well as in the span of $v_0$ and $v_1$. Thus $L_iv_0=\lambda v_0$ for some $\lambda \in \RR$. Since this holds for all vectors $v_0\in \RR^d$ we see that $L_i=\lambda I_d$.  	
	\end{proof}

We now prove that the invariants from \eqref{eq:det2} separate $SL(d)$-orbits of full rank matrices:
	\begin{proposition}\label{prop:special}
		For every $d,n$ with $n \geq d$, the polynomial invariants from equation~\ref{eq:det2} separates $SL(d)$-orbits on the set $\RR_{full}^{d \times n}$.
	\end{proposition}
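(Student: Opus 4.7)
Proof proposal: The strategy is to reduce to $d \times d$ linear algebra by choosing $d$ independent columns and to then use Cramer's rule to bootstrap agreement on these columns to agreement on all remaining columns.

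First I would fix $X,Y \in \RR_{full}^{d\times n}$ with $[i_1,\ldots,i_d](X) = [i_1,\ldots,i_d](Y)$ for every increasing multi-index. Since $X$ is full rank, some $d$-tuple of its columns is linearly independent, and after relabeling (applied identically to $X$ and $Y$, which preserves the hypothesis up to sign) I may assume the first $d$ columns $\bar X = (x_1,\ldots,x_d)$ are independent. Then $\det(\bar X) = [1,\ldots,d](X) = [1,\ldots,d](Y) = \det(\bar Y) \neq 0$, so $\bar Y = (y_1,\ldots,y_d)$ is also invertible.

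Next I would define $A := \bar X \bar Y^{-1} \in GL(d)$. The multiplicativity of the determinant gives $\det(A) = \det(\bar X)/\det(\bar Y) = 1$, so $A \in SL(d)$ and by construction $A y_i = x_i$ for $i = 1,\ldots,d$. It remains to verify $A y_j = x_j$ for every $j > d$. Writing $y_j = \bar Y \alpha$ and $x_j = \bar X \beta$ for unique $\alpha,\beta \in \RR^d$, the equality $A y_j = x_j$ is equivalent to $\alpha = \beta$ (since $\bar X = A\bar Y$ and $\bar X$ is invertible).

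For each coordinate $k$, Cramer's rule expresses $\alpha_k$ as the ratio $\det(y_1,\ldots,y_{k-1},y_j,y_{k+1},\ldots,y_d)/\det(\bar Y)$, and similarly for $\beta_k$ with $X$ in place of $Y$. The numerators are, up to a common sign from reordering $\{1,\ldots,k-1,j,k+1,\ldots,d\}$ into increasing order, the invariants $[1,\ldots,k-1,k+1,\ldots,d,j]$ evaluated on $X$ and $Y$, and the denominators are $[1,\ldots,d]$ evaluated on $X$ and $Y$. Both numerators agree by hypothesis, and so do both denominators, so $\alpha_k = \beta_k$. This holds for all $k$ and all $j > d$, giving $X = AY$ with $A \in SL(d)$.

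I do not foresee a serious obstacle: the main care is bookkeeping with signs when we reorder the column index set $\{1,\ldots,k-1,j,k+1,\ldots,d\}$ into increasing order, and confirming that the initial WLOG reduction (reordering all indices so that the $d$ independent columns of $X$ are the first $d$) is compatible with the $SL(d)$-orbit statement, which it is because column permutation is a right action commuting with the left $SL(d)$-action and it permutes the family of invariants among themselves.
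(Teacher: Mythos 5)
Your proof is correct and is essentially the paper's argument run in the forward direction: the paper proves the contrapositive by constructing the map $A$ sending one set of basis columns to the other and exhibiting a separating invariant $[1,\ldots,k-1,j,k+1,\ldots,d]$ when the expansion coefficients $\alpha_k,\beta_k$ of a remaining column differ, while you use Cramer's rule to show those same coefficients agree — the identical determinant comparison, just organized directly rather than by contradiction. The sign bookkeeping and the WLOG column reordering are handled correctly, so there is no gap.
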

	\begin{proof}
		Let $X,Y\in \RR_{full}^{d \times n} $ and assume that $X$ and $Y$ are not related by a special linear transformation. Since $X$ is full rank, it has $d$ columns which are linearly independent, for simplicity of notation we assume these are the first $d$ columns. It the first $d$ columns of $Y$ are not linearly independent, or more generally if 
		$$\det[x_1,\ldots,x_d]\neq \det[y_1,\ldots,y_d] $$
		then the polynomial $[1,2,\ldots,d]$ separates $X$ and $Y$. If these two determinants are equal then let $A$ be the matrix satisfying $Ax_i=y_i, i=1,\ldots,d$, we see that $\det(A)=1$. Denote $\tilde X=AX$.  By assumption, $AX\neq Y$ so  there exists some index $j, d<j\leq n$ such that $Ax_j\neq  y_j$. Since $y_1,\ldots,y_d$ span $\RR^d$, there exist $\alpha_1,\ldots,\alpha_d$ and $\beta_1,\ldots,\beta_d$ such that 
		$$Ax_j=\sum_{i=1}^d \alpha_iy_i, y_j=\sum_{i=1}^d \beta_iy_i  $$
		and since $Ax_j\neq y_j$ there exists some $k, 1\leq k \leq d$ such that $\alpha_k\neq \beta_k$. It follows that $X$ and $Y$ are separated by the polynomial $[1,2,\ldots,k-1,j,k+1,\ldots,d]$ since
		\begin{align*}
			\det[ x_1,\ldots, x_{k-1}, x_j, x_{k+1},\ldots, x_d]
			&=\det[ Ax_1,\ldots, Ax_{k-1}, Ax_j, Ax_{k+1},\ldots, Ax_d]\\ &=\det[y_1,\ldots,y_{k-1},\alpha_ky_k,y_{k+1},\ldots,y_d]\\
			&\neq \det[y_1,\ldots,y_{k-1},\beta_ky_k,y_{k+1},\ldots,y_d]\\
			&=    \det[y_1,\ldots,y_{k-1},y_j,y_{k+1},\ldots,y_d]
		\end{align*}
	\end{proof}

\section{Appendix: Experimental Setup}\label{app:exp}
The data for the binary classification experiment described i{app:exp}n Figure~\ref{fig:sort}(c) was generated as follows: we generated $D$ dimensional convex sets $\M_0,\M_1$ by randomly choosing $D+1$ points in $\RR^{3\times 1024} $, and considering all possible permutations of the columns of these point clouds by $S_n, n=1024 $. We generated $1,000$ training samples and $1,000$ test samples by randomly choosing between $\M_i, i=0,1$, then randomly choosing a probability vector $t\in \RR^{D+1} $ and using it do define a point cloud as a convex combination of the $D+1$ point clouds used to generate $\M_i$, and finally applying a random permutation. 

For classification we used the same MLP architecture used in Point-Net \cite{qi2017pointnet}, with input dimension $m$ varying as shown in the table, and three hidden layers of dimensions $1,024,512,256 $. Batch normalization was applied to each layer, we used dropout with $p=0.3$ and used Adam for optimization with learning rate $0.001$. Each entry in the tables in Figure~\ref{fig:sort}(c) shows the average accuracy of ten random initializations of the embedding and network parameters.   

For visualization purposes, in Figure~\ref{fig:sort}(a) we only show $30 $ permutations applied to the original lines and not all possible permutations. In Figure~\ref{fig:sort}(a)-(b) we also took $n=10$ rather than $n=1024$ as used in  Figure~\ref{fig:sort}(c), as for larger values of $n$ the lines in (b) have more oscillations and are more difficult to visualize.
	
\end{document}